\documentclass{article}

\usepackage[accepted]{./style/icml2025_main}

\usepackage{microtype}

\usepackage{amsmath,amsfonts,bm,amssymb,nicefrac,algorithm}
\usepackage{amsthm}
\usepackage{booktabs}          %
\usepackage{graphicx}          %
\usepackage{subcaption}        %
\usepackage{multirow}        %
\usepackage{stmaryrd}        %
\usepackage{yhmath}        %

\def\eqref#1{Eq.~\ref{#1}}   %

\def\eps{{\epsilon}}

\def\vtheta{{\bm{\theta}}}

\def\vx{{\bm{x}}}

\def\vz{{\bm{z}}}

\def\mG{{\bm{G}}}
\def\mH{{\bm{H}}}
\def\mI{{\bm{I}}}

\def\mSigma{{\bm{\Sigma}}}

\def\vec{{\text{vec}}}

\DeclareMathOperator{\bbv}{\mathbf{v}}
\DeclareMathOperator{\bx}{\mathbf{x}}
\DeclareMathOperator{\bz}{\mathbf{z}}
\DeclareMathOperator{\btheta}{\mathbf{\theta}}
\DeclareMathOperator{\bI}{\mathbf{I}}

\DeclareMathOperator{\bmu}{\bm{\mu}}
\DeclareMathOperator{\bepsilon}{\mathbf{\epsilon}}
\DeclareMathOperator{\bSigma}{\mathbf{\Sigma}}
\DeclareMathOperator{\bzero}{\mathbf{0}}
\DeclareMathOperator{\bu}{\mathbf{u}}
\DeclareMathOperator{\bs}{\mathbf{s}}
\DeclareMathOperator{\bS}{\mathbf{S}}

\DeclareMathOperator{\bbf}{\mathbf{f}}
\DeclareMathOperator{\bg}{\mathbf{g}}

\newcommand{\E}{\mathbb{E}}

\newcommand{\R}{\mathbb{R}}

\newtheoremstyle{mythmstyle} 
{\topsep}    %
{\topsep}    %
{\itshape}   %
{0pt}        %
{\bfseries}  %
{}           %
{ }          %
{}           %
\theoremstyle{mythmstyle}
\newtheorem{theorem}{Theorem}

\newtheorem{lemma}[theorem]{Lemma}

\newtheorem{proposition}[theorem]{Proposition}

 \newtheorem*{proposition*}{Proposition}
 \newtheorem*{theorem*}{Theorem}
  \newtheorem*{lemma*}{Lemma}

\usepackage{physics}
\usepackage{hyperref}
\usepackage[capitalize,noabbrev]{cleveref}

\definecolor{blush}{rgb}{0.87, 0.36, 0.51}

\icmltitlerunning{Density Ratio Estimation with Conditional Probability Paths}

\begin{document}

\twocolumn[
\icmltitle{Density Ratio Estimation with Conditional Probability Paths}

\icmlsetsymbol{equal}{*}

\begin{icmlauthorlist}
\icmlauthor{Hanlin Yu}{uh}
\icmlauthor{Arto Klami}{uh}
\icmlauthor{Aapo Hyv{\"a}rinen}{uh}
\icmlauthor{Anna Korba}{ensae}
\icmlauthor{Omar Chehab}{ensae}
\end{icmlauthorlist}

\icmlaffiliation{uh}{University of Helsinki, Finland}
\icmlaffiliation{ensae}{ENSAE, CREST, IP Paris, France}

\icmlcorrespondingauthor{Hanlin Yu}{hanlin.yu@helsinki.fi}

\icmlkeywords{score matching, density ratio estimation, probability paths}

\vskip 0.3in
]

\printAffiliationsAndNotice{}

\begin{abstract}
    Density ratio estimation in high dimensions can be reframed as integrating a certain quantity, the time score, over probability paths which interpolate between the two densities. In practice, the time score has to be estimated based on samples from the two densities. However, existing methods for this problem remain computationally expensive and can yield inaccurate estimates. Inspired by recent advances in generative modeling, we introduce a novel framework for time score estimation, based on a conditioning variable. Choosing the conditioning variable judiciously enables a closed-form objective function. We demonstrate that, compared to previous approaches, our approach results in faster learning of the time score and competitive or better estimation accuracies of the density ratio on challenging tasks. Furthermore, we establish theoretical guarantees on the error of the estimated density ratio.

\end{abstract}

\section{Introduction}
\label{sec:introduction}

Estimating the ratio of two densities is a fundamental task in machine learning, with diverse applications \citep{Sugiyama2010}. 
For instance, by assuming that one of the densities is tractable, often a standard Gaussian, we can construct an estimator for the other density 
by estimating their ratio~\citep{gutmann2012nce,gao2019noiseadaptivence,Rhodes2020,choi2022densityratio}. It is also possible to consider a scenario where both densities are not tractable. As noted by previous works \citep{choi2022densityratio}, density ratio estimation finds broad applications across machine learning, from mutual information estimation \citep{song2020vmie}, generative modelling \citep{goodfellow2020generative}, importance sampling \citep{sinha2020neuralbridge}, likelihood-free inference \citep{izbicki2014} to domain adaptation \citep{Wang2023}.

\begin{figure}[ht]
\vskip 0.2in
\begin{center}
\centerline{\includegraphics[width=\columnwidth]{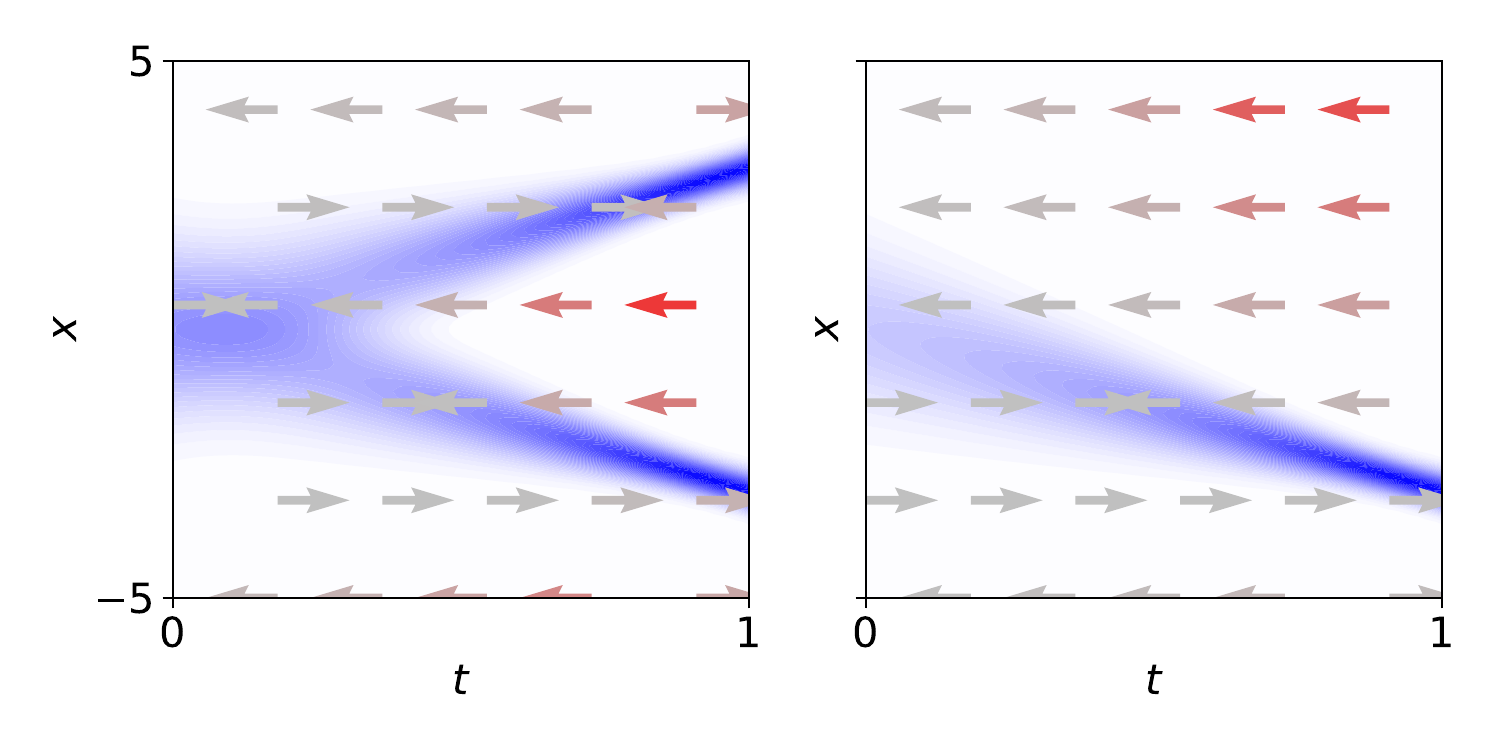
}}
\caption{
Densities are shown in blue.
\textit{Left}: A bi-modal probability path transitioning from a Gaussian distribution (\(t = 0\)) to a mixture of Diracs (\(t = 1\)). This path is estimated using ``time scores", which are not available in closed form in general; they are depicted by arrows, with  magnitudes ranging from low (gray) to high (red).  
\textit{Right}: A useful decomposition of the probability path and time scores is obtained by \textit{conditioning} on a final data point. The ensuing \textit{conditional} density is Gaussian, and thus, the ensuing \textit{conditional} time scores are analytically tractable. We propose to use this decomposition to estimate the ``time scores". 
}
\label{fig:illustration}
\end{center}
\vskip -0.2in
\end{figure}

The seminal work by~\citet{gutmann2012nce} proposed a learning objective for estimating the ratio of two densities, 
by identifying from which density a sample is drawn. This can be done by binary classification. However, their estimator has a high variance when the densities have little overlap, which makes it impractical for problems in high dimensions~\citep{lee2023ncevariance,Chehab2023provable}.

To address this issue, \citet{Rhodes2020} proposed connecting the two densities with a probability path and estimating density ratios between consecutive distributions. Since two consecutive distributions are ``close" to each other, the statistical efficiency may improve at the cost of increased computation, as there are multiple binary classification tasks to solve. \citet{choi2022densityratio} examined the limiting case where the intermediate distributions become infinitesimally close. In this limit, the density ratio converges to a quantity known as the time score, which is learnt by optimizing a Time Score Matching (TSM) objective. While this limiting case leads to empirical improvements, the TSM objective is computationally inefficient to optimize, and the resulting estimator may be inaccurate. Moreover, it is unclear what are the theoretical guarantees associated with the estimators.

In this work, we address these limitations. First, in Section~\ref{sec:estimating_time_score} we introduce a novel learning objective for the time score, which we call \textit{Conditional Time Score Matching (CTSM)}. It is based on recent advancements in generative modeling \citep{vincent2011denoisingscorematching,pooladian2023conditionalflowmatching,tong2024conditionalflowmatching}, which consider probability paths that are explicitly decomposed into mixtures of simpler paths, and where the time score is obtained in closed form. We demonstrate empirically that the CTSM objective significantly accelerates optimization in high-dimensional settings, and is several times faster 
compared to TSM. 

Second, in Section~\ref{sec:design_choices} we modify our CTSM objective with a number of techniques that are popular in generative modeling~\citep{song2021sde,choi2022densityratio,tong2024conditionalflowmatching} to ease the learning. In particular, we derive a closed-form weighting function for the objective, as well as a vectorized version of the objective which we call \textit{Vectorized Conditional Time Score Matching (CTSM-v)}. Together, these modifications substantially improve the estimation of the density-ratio in high dimensions, leading to stable estimators and significant speedups. 

Third, in Section~\ref{sec:theoretical_guarantees} we provide theoretical guarantees for density ratio estimation using probability paths, addressing a gap in prior works~\citep{Rhodes2020,choi2022densityratio}.

\section{Background}
\label{sec:background}

Our goal is to estimate the ratio between two densities $p_0$ and $p_1$, given samples from both. We start by defining a distribution over labels $t$ and data points $\bx$,
\begin{align}
    \label{eq:joint_model_two_variables}
    p(\bx, t) = p(t) p(\bx | t)
\end{align}
constructed such that we recover $p_{0}$ and $p_{1}$ for $t=0$ and $t=1$ respectively. We next show how several relevant methods can be viewed as variations on this formalism. 

\paragraph{Binary label}
Fundamental approaches to density-ratio estimation consider
a binary label $t \in \{0, 1\}$. Among them, Noise Contrastive Estimation (NCE) is based on the observation that the density ratio is related to the binary classifier $p(t | \vx)$~\citep[Eq. 5]{gutmann2012nce}. NCE estimates that classifier by minimizing a binary classification loss based on logistic regression, computed using samples drawn from $p_0$ and $p_1$. In practice, using NCE is challenging when $p_0$ and $p_1$ are ``far apart''. In that case, both the binary classification loss becomes harder to optimize~\citep{liu2022nceoptim} and the sample-efficiency of its minimizer deterioriates~\citep{gutmann2012nce,lee2023ncevariance,Chehab2023optimizing,Chehab2023provable}. 

\paragraph{Continuous label}
More recent developments relax the label so that it is continuous $t \in [0, 1]$. Now, conditioning on $t$ defines intermediate distributions $p(\bx | t)$, equivalently noted $p_t(\bx)$, along a probability path that connects $p_0$ to $p_1$. Then, the following identity is used~\citep{choi2022densityratio}
\begin{align}
    \label{eq:main_identity}
    \log \frac{p_1(\bx)}{p_0(\bx)}
    =
    \int_0^1 \partial_t \log p_t(\bx) dt,
\end{align}
or its discretization in time~\citep{Rhodes2020}. 

\paragraph{Probability path}
We next consider a popular use-case, where $p_0$ is a Gaussian and $p_1$ is the data density~\citep{Rhodes2020,choi2022densityratio}; since $p_0$ is known analytically, the ratio of the two provides directly an estimator for $p_1$. 
In practice, one can construct a probability path where the intermediate distributions can  be sampled from but their densities cannot be evaluated. This is because the probability path is defined by interpolating samples from $p_0$ and $p_1$. There are multiple ways to define such interpolations~\citep{Rhodes2020,Albergo2023}, which we will further discuss in Section~\ref{sec:design_choices}. A widely used approach is the Variance-Preserving (VP) probability path, which can be simulated by~\citep{song2021sde,lipman2023conditionalflowmatching,choi2022densityratio}
\begin{align}
\label{eq:vp_path_simulation}
\bx = \sqrt{\alpha_{t}^2} \bx_{1} + \sqrt{1 - \alpha_{t}^{2}}\bx_{0},
\end{align}
where $\bx_0 \sim \mathcal{N}(\bzero, \mI)$, $\bx_1 \sim p_1$ follows the data distribution, time is drawn uniformly $t \sim \mathcal{U}[0, 1]$ and $\alpha_t \in [0, 1]$ is a positive function that increases from $0$ to $1$. By conditioning on $t$, we obtain densities $p_t(\bx) = \frac{1}{\sqrt{1 - \alpha_{t}^{2}}} p_0(\frac{\bx}{\sqrt{1 - \alpha_{t}^{2}}}) \ast \frac{1}{\alpha_{t}} p_1(\frac{\bx}{\alpha_{t}})$ that cannot be computed in closed-form, given that the density $p_1$ is unknown and that the convolution requires solving a difficult integral. 
 
\paragraph{Estimating the time score}
Importantly, the identity in~\eqref{eq:main_identity} requires estimating the time score $\partial_t \log p_t(\bx)$, which is the Fisher score where the parameter is the label $t$. It can also be related to the binary classifier between two infinitesimally close distributions $p_t$ and $p_{t + dt}$~\citep[Proposition 3]{choi2022densityratio}. Formally, this time score can be approximated by minimizing the following \textit{Time Score Matching (TSM)} objective
\begin{align}
    \label{eq:l2_loss}
    \mathcal{L}_{\text{TSM}}(\btheta) 
    = 
    \E_{p(t, \bx)} \big[
    \lambda(t)
    \big( \partial_{t}\log p_{t}(\bx) 
    - 
    s_{\btheta}(\bx, t) \big)^2
    \big],
\end{align}
where $\lambda(t)$ is any positive weighting function. This objective requires evaluating the time score $\partial_t \log p_t(\bx)$. However, as previously explained, the formula for the time score is unavailable because the densities $p_t$, while well-defined, are not known in closed form.

To make the learning objective in~\eqref{eq:l2_loss} tractable, an insight from~\citet{hyvarinen2005scorematching} led~\citet{choi2022densityratio,williams2025differential} to rewrite it using integration by parts. This yields
\begin{align}
\begin{split}
    \label{eq:integration_by_parts}
    \mathcal{L}_{\text{TSM}}(\btheta)
    =
    2 \mathbb{E}_{p_0(\bx)}[s_{\btheta}(\bx, 0)]
    -
    2 \mathbb{E}_{p_1(\bx)}[s_{\btheta}(\bx, 1)]
    + 
    \\
    \E_{p(t, \bx)}
    [
    2 \partial_{t}s_{\btheta}(\bx, t) 
    +
    2 \dot{\lambda}(t)
    s_{\btheta}(\bx, t)
    +
    \lambda(t)
    s_{\btheta}(\bx, t)^{2}
    ],
\end{split}
\end{align}
which no longer requires evaluating the time score $\partial_t \log p_t(\bx)$. However, this approach has one clear computational drawback: differentiating the term $\partial_{t} s_{\btheta}(\bx, t)$ in the loss~\eqref{eq:integration_by_parts} involves using automatic differentiation twice --- first in $t$ and then in $\btheta$ --- which can be time-consuming (we verify this in Section~\ref{sec:experiments}). This motivates us to find better ways of learning the time score.

\section{Novel Objectives for Time Score Estimation}
\label{sec:estimating_time_score}

In this section, we propose novel methods to estimate the time score.

\subsection{Basic Method}

\paragraph{Augmenting the state space}
First,  we rewrite~\eqref{eq:l2_loss} so that it is tractable. The idea is to further augment the state space to $(\bx, t, \bz)$ by introducing a \textit{conditioning variable} $\bz$, as in related literature. Thus, we extend the model from~\eqref{eq:joint_model_two_variables} into 
\begin{align}
    \label{eq:joint_model_three_variables}
    p(\bx, t, \bz) = p(t) p(\bz) p(\bx | t, \bz),
\end{align}
such that the intermediate distributions $p(\bx | t, \bz)$ --- now conditioned on $\bz$ --- can be sampled from \textit{and} evaluated. We remark that this insight is shared by previous research in score matching \citet{vincent2011denoisingscorematching} and flow matching \citep{lipman2023conditionalflowmatching,pooladian2023conditionalflowmatching,tong2024conditionalflowmatching}.

Consider for example~\eqref{eq:vp_path_simulation}. By choosing to condition on $\bz = \bx_1$, we get a closed-form $p(\bx | t, \bz) = \mathcal{N}(\bx; \alpha_t \bz, (1-\alpha_t^2) \mI)$. In this example, $\bz$ is a sample of ``raw" data (for example, real observed data) while $\bx$ is a corrupted version of data, and $t$ controls the corruption level, ranging from $0$ (full corruption) to $1$ (no corruption), as in~\citet{vincent2011denoisingscorematching}. In the following, we explain how to relate the descriptions of the \textit{intractable} marginal probability path $p_t(\bx)$ to descriptions of the \textit{tractable} conditional probability path $p_t(\bx | \bz)$.

\paragraph{Tractable objective for learning the time score}
As a result of~\eqref{eq:joint_model_three_variables}, we relate the time scores, obtained with and without conditioning on $\bz$ (derivations are in Appendix~\ref{app:sec:mixture_score})
\begin{equation}
    \label{eq:mixture_score}
    \partial_{t}\log p_{t}(\bx) 
    = 
    \E_{p_t(\bz | \bx)} \left[
    \partial_{t}\log p_{t}(\bx|\bz) 
    \right]
\end{equation}
and exploit this identity to learn the time score, 
by plugging~\eqref{eq:mixture_score} into the original loss in~\eqref{eq:l2_loss}. 
This way,  
we can reformulate the intractable objective in~\eqref{eq:l2_loss} into a tractable objective which we call the \textit{Conditional Time Score Matching (CTSM)} objective
    \begin{align}
        \label{eq:l2_loss_conditional}
        \mathcal{L}_{\text{CTSM}}(\btheta) 
        &= 
        \E_{p(\bx, \bz, t)} \big[
        \lambda(t) \big( \partial_{t}\log p_{t}(\bx|\bz) - s_{\btheta}(\bx, t) \big)^2 
        \big].
    \end{align}
Note that the regression target is given by the time score of the conditional distribution, $\partial_t \log p_t(\bx | \bz)$. The reformulation is justified by the following theorem:
\begin{theorem}[Regressing the time score]
    \label{theorem:ctsm_objective}
    The TSM loss~\eqref{eq:l2_loss} 
    and CTSM loss~\eqref{eq:l2_loss_conditional} are equal, up to an additive constant.
\end{theorem}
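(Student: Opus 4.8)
The plan is to expand the square in each objective and show that the two expansions agree on every term that depends on $\btheta$, so that their difference is a $\btheta$-independent constant. Expanding the square, each loss splits into three pieces: a target-only term, a cross term linear in $s_{\btheta}$, and a term quadratic in $s_{\btheta}$. I would compare these three pieces between TSM and CTSM separately.

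For the quadratic piece, note that $s_{\btheta}(\bx,t)$ does not depend on $\bz$, so marginalizing $\bz$ out of $p(\bx,\bz,t)$ recovers $p(t,\bx)$ and yields $\E_{p(\bx,\bz,t)}[\lambda(t)\,s_{\btheta}(\bx,t)^2] = \E_{p(t,\bx)}[\lambda(t)\,s_{\btheta}(\bx,t)^2]$; the quadratic pieces thus coincide. For the cross term, the key step is to factor the augmented joint as $p(\bx,\bz,t) = p(t)\,p_t(\bx)\,p_t(\bz\mid\bx)$ and push the inner expectation through $s_{\btheta}$, which is constant in $\bz$:
\begin{align*}
&\E_{p(\bx,\bz,t)}\big[\lambda(t)\,\partial_t\log p_t(\bx\mid\bz)\,s_{\btheta}(\bx,t)\big] \\
&\quad = \E_{p(t,\bx)}\Big[\lambda(t)\,s_{\btheta}(\bx,t)\,\E_{p_t(\bz\mid\bx)}\big[\partial_t\log p_t(\bx\mid\bz)\big]\Big].
\end{align*}
Applying the mixture-score identity~\eqref{eq:mixture_score}, the inner expectation equals $\partial_t\log p_t(\bx)$, so the cross term reduces exactly to that of TSM. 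The two remaining target-only terms, $\E_{p(t,\bx)}[\lambda(t)(\partial_t\log p_t(\bx))^2]$ and $\E_{p(\bx,\bz,t)}[\lambda(t)(\partial_t\log p_t(\bx\mid\bz))^2]$, are both independent of $\btheta$, so their difference is precisely the claimed additive constant.

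The main obstacle is the correct handling of the cross term: one must use the right factorization of the augmented joint, recognize that $p_t(\bz\mid\bx)$ is the appropriate conditional to integrate against, and invoke~\eqref{eq:mixture_score} in the form of a conditional expectation rather than a marginal one. Everything else is routine expansion and linearity of expectation. A minor point to verify is that the relevant expectations are finite, so that the rearrangements and the subtraction of the constant are well-defined; this holds under the mild integrability assumptions implicit in the setup.
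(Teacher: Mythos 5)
Your proof is correct and follows essentially the same route as the paper, which proves the general Theorem~\ref{theorem:marginal_vs_condition_regression} by expanding the square, matching the quadratic term via marginalization over $\bz$, and matching the cross term via the mixture identity $\E_{p_t(\bz\mid\bx)}[\partial_t\log p_t(\bx\mid\bz)]=\partial_t\log p_t(\bx)$, with the target-only terms absorbed into the additive constant. The only cosmetic difference is that the paper phrases the comparison in terms of gradients $\nabla_{\btheta}\mathcal{L}$ and derives Theorem~\ref{theorem:ctsm_objective} as a special case of the general statement, whereas you compare the losses directly.
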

The proof can be found in Appendix~\ref{app:sec:theorem3_proof}. This new objective is useful, as it requires evaluating the time score of the tractable distribution $p_t(\bx | \bz)$ instead of the intractable distribution $p_t(\bx)$. By minimizing this objective, the model $s_{\btheta}(\bx,t)$ learns to output $\partial_{t} \log p_{t}(\bx)$. A similar observation was made in~\citet[Appendix L.3.]{deBortoli2022}, however they did not translate this observation into the CTSM objective and use it for learning. Furthermore, their setting was more restrictive, as the conditioning variable was specifically chosen to be $\bx_1$.

\subsection{Vectorized Variant}

We propose a further objective for learning the time score, called \textit{Vectorized Conditional Time Score Matching (CTSM-v)}. The idea is that we can easily vectorize the learning task, by forming a joint objective over the $D$ dimensions. The intuition is that the time score can be written as a sum of autoregressive terms, and that we learn each term of the sum instead of the final result only. We verify in section~\ref{sec:experiments} that this approach empirically leads to better performance.
Formally, define the vectorization of the conditional time score as the result of stacking its components as
\begin{equation}
\text{vec}(\partial_t \log p_t(\bx | \bz)) = [\partial_t \log p_t(x^i | \bx^{<i}, \bz)]_{i \in \llbracket 1, D \rrbracket}^\top.
\end{equation}
The time score is then obtained by summing these components. Our vectorized objective is given by
\begin{align}
&\mathcal{L}_{\text{CTSM-v}}(\btheta)
= 
\E_{p(t, \bz, \bx)}\nonumber 
\\    &\left[\lambda(t)\norm{\text{vec}(\partial_{t}\log p_{t}(\bx|\bz)) - \bs^{\text{vec}}_{\btheta}(\bx,t)}^2\right].
\label{eq:full-conditional-time-score}
\end{align}

\begin{theorem}[Regressing the vectorized time score]
    \label{theorem:ctsm_v_objective}
    The CTSM-v objective~\eqref{eq:full-conditional-time-score} is minimized when the sum of the entries of the score network equals the time score.
\end{theorem}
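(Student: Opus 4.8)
The plan is to reduce the statement to two ingredients: an exact autoregressive decomposition of the conditional time score, and a pointwise conditional-expectation minimization argument of the same type used behind Theorem~\ref{theorem:ctsm_objective}.

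First I would establish the decomposition. By the chain rule of probability, $\log p_t(\bx \mid \bz) = \sum_{i=1}^D \log p_t(x^i \mid \bx^{<i}, \bz)$, so differentiating in $t$ gives
\begin{equation*}
\partial_t \log p_t(\bx \mid \bz) = \sum_{i=1}^D \partial_t \log p_t(x^i \mid \bx^{<i}, \bz),
\end{equation*}
which is exactly the statement that the entries of $\text{vec}(\partial_t \log p_t(\bx \mid \bz))$ sum to the conditional time score. This step is routine; the only care needed is that the factorization and the $t$-differentiation are valid, which holds for the tractable (Gaussian) conditional paths we use.

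Second I would characterize the minimizer of \eqref{eq:full-conditional-time-score}. Writing the squared Euclidean norm as a sum over coordinates, the objective decouples into $D$ independent regression problems, the $i$-th being
\begin{equation*}
\E_{p(t,\bz,\bx)}\big[\lambda(t)\big(\partial_t \log p_t(x^i \mid \bx^{<i}, \bz) - [\bs^{\text{vec}}_{\btheta}(\bx,t)]_i\big)^2\big].
\end{equation*}
Since the network entry $[\bs^{\text{vec}}_{\btheta}(\bx,t)]_i$ depends only on $(\bx,t)$ while the regression target additionally depends on $\bz$, I would condition on $(\bx,t)$, add and subtract the conditional mean $\E_{p_t(\bz\mid\bx)}[\partial_t \log p_t(x^i \mid \bx^{<i}, \bz)]$, and note that the cross-term vanishes under $p_t(\bz\mid\bx)$. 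This is the same add-and-subtract manipulation used for Theorem~\ref{theorem:ctsm_objective}, and it shows that over all functions of $(\bx,t)$ each coordinate is minimized precisely when
\begin{equation*}
[\bs^{\text{vec}}_{\btheta}(\bx,t)]_i = \E_{p_t(\bz\mid\bx)}\big[\partial_t \log p_t(x^i \mid \bx^{<i}, \bz)\big].
\end{equation*}

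Finally I would sum the optimal coordinates, exchange the finite sum with the expectation, and apply the decomposition from the first step followed by the mixture-score identity \eqref{eq:mixture_score}:
\begin{equation*}
\sum_{i=1}^D [\bs^{\text{vec}}_{\btheta}(\bx,t)]_i = \E_{p_t(\bz\mid\bx)}\big[\partial_t \log p_t(\bx \mid \bz)\big] = \partial_t \log p_t(\bx),
\end{equation*}
which is exactly the claim that at the minimizer the sum of the entries of the score network equals the time score. The main obstacle is the second step: justifying the pointwise (coordinatewise) minimization, i.e. existence of the conditional expectations, sufficient expressivity of the network to realize the unconstrained optimum in each coordinate, and the cancellation of the cross-term under the conditional measure $p_t(\bz\mid\bx)$. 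The first and third steps then follow immediately from the chain rule and \eqref{eq:mixture_score}.
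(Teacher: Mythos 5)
Your proposal is correct and follows essentially the same route as the paper: the paper derives Theorem~\ref{theorem:ctsm_v_objective} as a special case of Theorem~\ref{theorem:marginal_vs_condition_regression} (whose proof expands the square and matches cross-terms, which is the same mechanism as your add-and-subtract conditional-mean argument), identifies the minimizer as $\E_{p_t(\bz\mid\bx)}[\text{vec}(\partial_t\log p_t(\bx\mid\bz))]$, and then, exactly as you do, swaps the finite sum with the expectation and uses the chain-rule decomposition together with \eqref{eq:mixture_score} to conclude that the entries sum to $\partial_t\log p_t(\bx)$. The only cosmetic difference is that you decouple the objective into $D$ scalar regressions while the paper treats the vector-valued case directly; both implicitly assume the network is expressive enough to realize the unconstrained optimum.
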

This is proven in Appendix~\ref{app:sec:theorem3_proof}. By minimizing this objective, the model $s_{\btheta}^{\text{vec}}(\bx, t)$ learns to output $[\E_{p_t(\bz | \bx)}[\partial_t \log p_t(x^i | \bx^{<i} | \bz)]]_{i \in \llbracket1, D \rrbracket}^\top$; this is further justified in the next     Theorem~\ref{theorem:marginal_vs_condition_regression}. The original time score can be obtained from the learnt $s_{\btheta}^{\text{vec}}(\bx, t)$ by summing all the entries. Further, while the components of the regression target are formally given by $[\partial_t \log p_t(x^i | \bx^{<i}, \bz)]_{i \in \llbracket 1, D \rrbracket}^\top$, for commonly used probability paths like the VP path, the dependency on $\bx^{<i}$ is dropped. We remark that \citet{meng2020autoregressive} proposed autoregressive score matching which shares similar spirit, albeit for the purpose of training scalable autoregressive models and based on Stein score.

\subsection{General Framework}

We next show that our learning objectives, i.e., both the conditional time score matching one and the vectorized variant, are actually special cases of a more general framework.

Just as we related the marginal and conditional time scores, $\partial_t \log p_t(\bx)$ and $\partial_t \log p_t(\bx | \bz)$ in~\eqref{eq:mixture_score}, let us now consider the same identity for general, vector or scalar valued functions $\bg(\bx, t)$ and $\bbf(\bx, t, \bz)$, where $t\in [0,1]$
\begin{align}
    \label{eq:mixture_vector_field}
    \bg(\bx,t) 
    = 
    \E_{ p_{t}(\bz|\bx)}[
    \bbf(\bx,t, \bz)
    ].
\end{align}
By analogy to previous paragraphs, we call the functions $\bg$ and $\bbf$, ``marginal" and ``conditional". We consider the scenario where $\bg(\bx,t)$ is intractable, yet $\bbf(\bx,t,\bz)$ is tractable. Similarly, we obtain a theorem that states that a regression problem over the ``marginal" function $\bg(\bx,t)$ can be reformulated as a regression problem over the ``conditional" function $\bbf(\bx,t|\bz)$, thus resulting in a tractable training objective.
\begin{theorem}[Regressing a function]
    \label{theorem:marginal_vs_condition_regression}
    Consider vector or scalar valued functions $\bbf(\bx,t|\bz)$ and $\bg(\bx,t) = \E_{ p_{t}(\bz|\bx)}[\bbf(\bx,t|\bz)]$. Then, the following two loss functions are equal up to an additive constant that does not depend on $\btheta$:
    \begin{align}
    \mathcal{L}_{\bbf}(\btheta) 
    &= 
    \E_{p(t, \bz, \bx)}\left[\lambda(t)\norm{\bbf(\bx,t|\bz) - \bs_{\btheta}(\bx,t)}^2\right],
    \\
    \mathcal{L}_{\bg}(\btheta) 
    &= 
    \E_{p(t, \bx)}\left[\lambda(t)\norm{\bg(\bx,t) - \bs_{\btheta}(\bx,t)}^2\right].
    \end{align}
\end{theorem}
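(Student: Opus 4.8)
The plan is to follow the standard denoising-style argument from score matching: expand both squared objectives, isolate the terms that depend on $\btheta$, and show these coincide exactly, collecting everything else into a $\btheta$-independent constant. The crucial structural fact I would exploit is that $\bs_{\btheta}(\bx,t)$ and $\lambda(t)$ do not depend on the conditioning variable $\bz$, which lets the score network be pulled out of any expectation over $\bz$.

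First I would rewrite the joint law in the convenient factorization $p(t,\bz,\bx) = p(t,\bx)\, p_t(\bz\mid\bx)$ via Bayes' rule, so that the defining identity $\bg(\bx,t) = \E_{p_t(\bz\mid\bx)}[\bbf(\bx,t\mid\bz)]$ is realized as an inner conditional expectation sitting inside the outer expectation $\E_{p(t,\bx)}$. Next I would expand the conditional loss as
\begin{align}
    \mathcal{L}_{\bbf}(\btheta)
    =
    \E_{p(t,\bz,\bx)}\big[\lambda(t)\norm{\bbf}^2\big]
    -
    2\,\E_{p(t,\bz,\bx)}\big[\lambda(t)\,\bbf^\top \bs_{\btheta}\big]
    +
    \E_{p(t,\bz,\bx)}\big[\lambda(t)\norm{\bs_{\btheta}}^2\big],
\end{align}
and the marginal loss analogously with $\bg$ in place of $\bbf$. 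The first term of $\mathcal{L}_{\bbf}$ does not involve $\btheta$, so it is a constant. In the quadratic term, since $\bs_{\btheta}(\bx,t)$ is independent of $\bz$, marginalizing $\bz$ gives $\E_{p(t,\bz,\bx)}[\lambda(t)\norm{\bs_{\btheta}}^2] = \E_{p(t,\bx)}[\lambda(t)\norm{\bs_{\btheta}}^2]$, matching the corresponding term in $\mathcal{L}_{\bg}$.

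The key step is the cross term. Conditioning on $(\bx,t)$ and using the tower property together with the $\bz$-independence of $\lambda(t)\bs_{\btheta}$ gives
\begin{align}
    \E_{p(t,\bz,\bx)}\big[\lambda(t)\,\bbf(\bx,t\mid\bz)^\top \bs_{\btheta}(\bx,t)\big]
    =
    \E_{p(t,\bx)}\Big[\lambda(t)\,\bs_{\btheta}(\bx,t)^\top \E_{p_t(\bz\mid\bx)}[\bbf(\bx,t\mid\bz)]\Big]
    =
    \E_{p(t,\bx)}\big[\lambda(t)\,\bg^\top \bs_{\btheta}\big],
\end{align}
so the cross terms of the two losses are identical. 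Subtracting, every $\btheta$-dependent contribution cancels and I am left with $\mathcal{L}_{\bbf}(\btheta) - \mathcal{L}_{\bg}(\btheta) = \E_{p(t,\bz,\bx)}[\lambda(t)\norm{\bbf}^2] - \E_{p(t,\bx)}[\lambda(t)\norm{\bg}^2]$, which does not depend on $\btheta$, establishing the claim. I expect the only real subtlety — and thus the main obstacle worth stating carefully — to be the justification of the tower-property interchange: one must confirm that the factorization $p(t,\bz,\bx)=p(t,\bx)p_t(\bz\mid\bx)$ is exactly the conditioning under which $\bg$ is defined, and that $\lambda(t)\bs_{\btheta}(\bx,t)$ is measurable with respect to $(\bx,t)$ so it can be treated as constant inside $\E_{p_t(\bz\mid\bx)}[\cdot]$; everything else is routine bilinear expansion.
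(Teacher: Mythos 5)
Your proposal is correct and follows essentially the same route as the paper's proof: a bilinear expansion of the squared losses, noting that the $\norm{\bs_{\btheta}}^2$ term is unaffected by marginalizing $\bz$, and matching the cross terms via the identity $p(\bz)p_t(\bx|\bz) = p_t(\bx)p_t(\bz|\bx)$ (the paper phrases this through the gradients $\nabla_{\btheta}\mathcal{L}$ rather than the losses themselves, but the computation is identical). Your direct treatment of the losses is if anything slightly cleaner, since it exhibits the additive constant explicitly rather than inferring it from equality of gradients.
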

We prove this result in Appendix~\ref{app:sec:theorem3_proof}. 
Our Theorem~\ref{theorem:ctsm_objective} is a special case when $\bbf(\bx,t|\bz) = \partial_{t}\log p_{t}(\bx|\bz)$ and $\bg(\bx,t)= \partial_{t}\log p_{t}(\bx)$. Similarly, our Theorem~\ref{theorem:ctsm_v_objective} is a special case when $\bbf(\bx,t|\bz)=\vec(\partial_{t}\log p_{t}(\bx|\bz))$ and $\bg(\bx,t)=\E_{p_{t}(\bz|\bx)}\left[\bbf(\bx,t)\right]$. 

Versions of Theorem~\ref{theorem:marginal_vs_condition_regression} appear multiple times in the literature, yet they have always been stated for specific functions $\bg$ that are Stein scores $\partial_{\bx} \log p_t(\bx)$~\citep{vincent2011denoisingscorematching,song2021sde} or velocities that generate the probability path~\citep{lipman2023conditionalflowmatching, pooladian2023conditionalflowmatching, tong2024conditionalflowmatching}.
For example, in \citet{vincent2011denoisingscorematching}, $\bbf(\bx, t | \bz) = \partial_{\bx} \log p_t(\bx | \bz)$ and $p(t)$ is a Dirac. 
In \citet{tong2024conditionalflowmatching}, $\bbf(\bx, t | \bz) = v_t(\bx | \bz)$ which is a velocity  such that the solution to the ordinary differential equation $\dot{\bx}_t = \bbv_t(\bx | \bz)$ has marginals $p_t(\bx | \bz)$. To our knowledge, it has not been stated for general functions, whose output may have any dimensionality, nor has it been applied to time scores or vectorized time scores, as we do. 

\section{Design Choices}
\label{sec:design_choices}

In the previous section, we derived two novel and tractable learning objectives for the density ratio of two distributions, CTSM~\eqref{eq:l2_loss_conditional} and CTSM-v~\eqref{eq:full-conditional-time-score}. In this section, we consider two design choices for both of these learning objectives --- the conditional probability path $p_t(\bx | \bz)$ and the weighting function $\lambda(t)$.  

\paragraph{Choice of probability path}
Our regression objectives require computing the time score and its vectorization of a conditional density that is analytically known. One natural choice is a Gaussian $p_{t}(\bx|\bz) = \mathcal{N}(\bx ; \bmu_{t}(\bz),k_{t}\bI)$~\citep{lipman2023conditionalflowmatching}, so that the conditional time score is obtained in closed form. We specify popular choices of $\bz$, $\bmu_t(\bz)$, $k_t$ in Appendix~\ref{app:sec:mixture}. 

In particular, previous works on density ratio estimation~\citet{Rhodes2020,choi2022densityratio} focused on the VP probability path~\eqref{eq:vp_path_simulation}, which is also popular in the literature of diffusion models~\citep{sohl-dickstein2015deep,ho2020ddpm,song2021sde}. By conditioning~\eqref{eq:vp_path_simulation} on $\bz=\bx_{1}$, we obtain the conditional densities
\begin{align}
    p_{t}(\bx|\bz) 
    = 
    \mathcal{N} \left(\bx;\alpha_{t}\bx_{1},(1-\alpha_{t}^{2}) \bI\right).
\end{align}
The conditional time score is
\begin{align}
    \partial_{t}\log p_{t}(\bx|\bz) 
    &= 
    D\frac{\alpha_{t}\alpha'_{t}}{1-\alpha_{t}^{2}} - \frac{\alpha_{t}\alpha'_{t}}{1-\alpha_{t}^{2}}\norm{\bepsilon}^{2}\\
    & + \frac{1}{\sqrt{1-\alpha_{t}^{2}}}\bepsilon^{\top}\alpha'_{t}\bx_{1},
\end{align}
where $\bepsilon = \frac{\bx-\alpha_{t}\bx_{1}}{\sqrt{1-\alpha_{t}^{2}}}$. Finally, the vectorized conditional time score is
\begin{align}
    \vec\left(\partial_{t}\log p_{t}(\bx|\bz)\right) 
    &= 
    \frac{\alpha_{t}\alpha'_{t}}{1-\alpha_{t}^{2}} - \frac{\alpha_{t}\alpha'_{t}}{1-\alpha_{t}^{2}}\bepsilon^{2}
    \\
    & + \frac{1}{\sqrt{1-\alpha_{t}^{2}}}\bepsilon\alpha'_{t}\bx_{1},
\end{align}
where the square and the product are element-wise operations. More discussion can be found in Appendix~\ref{app:ssec:vp_path}.

\paragraph{Choice of weighting function}

The cost function in~\ref{eq:l2_loss_conditional} combines multiple regression tasks, indexed by $t$, into a single objective, representing a multi-task learning problem. A practical challenge is determining how to weigh the different tasks~\citep{ruder2017multitask,Rhodes2020}.

Some approaches estimate a weighting function during training~\citep{kendall2017multitaskweights,nichol2021diffusion,choi2022densityratio,kingma2023diffusion}, 
while others use an approximation which does not depend on the parameter~\citep{song2021sde,tong2024schrodingerbridge}. We follow the latter approach and draw inspiration from the diffusion models literature \citep{ho2020ddpm,song2021sde}, where it is common to choose as weighting function 
\begin{align}
    \label{eq:path_var}
    \lambda(t)
    \propto \frac{1}{\E_{p(\bx, \bz)}\left[ \norm{\partial_{\bx}\log p_{t}(\bx|\bz)}^2 \right]},
\end{align}
which is also the default weighting scheme from~\citet{choi2022densityratio}. It was derived for estimating the Stein score $\partial_{\vx} \log p_t(\bx | \bz)$ \citep{song2021sde}, and we refer to this weighting scheme as \textit{Stein score normalization}. We show in Appendix~\ref{app:sec:mixture} that it simplifies to $\lambda(t) \propto k_t$.

However, as the name and the equation itself suggest, Stein score normalization is derived based on Stein score, thus not directly relating to the time score. One benefit of Stein score normalization is that its scaling essentially results in the regression targets having unit variances \citep{ho2020ddpm}.
However, the variance of the time score does not equal to the variance of the Stein score. We instead consider 
\begin{align}
    \label{eq:time_score_normalization}
    \lambda(t) 
    \propto \frac{1}{\E_{p(\bx, \bz)}\left[\partial_{t}\log p_{t}(\bx|\bz)^2 \right]}
\end{align}
for CTSM and CTSM-v. This new weighting, which we call \textit{time score normalization}, 
keeps the regressands roughly equal in magnitude. We explicitly compute this novel weighting function in Appendix~\ref{app:sec:mixture}: its formula depends on a quantity $c$ that is a function of the data distribution's mean and variance. A natural choice for $c$ is to compute these statistics from the data, but in our experiments, setting $c=1$ often yields better results. In our initial experiments, we found that using the time score normalization was important to achieve stable training. We remark that it is possible to apply time score normalization~\eqref{eq:time_score_normalization} to CTSM-v as well: upon assuming each dimensionality having equal scales, one can calculate the variances of the objective in each individual dimension and employ the same weighting scheme.

For the specific case of the VP path~\eqref{eq:vp_path_simulation}, the time score normalization can be defined as
\begin{align}
\hat{\lambda}(t) 
&= 
\frac{\left(1-\alpha_{t}^{2}\right)^{2}}{2\alpha_{t}^{2}\left(\alpha'_{t}\right)^{2}+\left(\alpha'_{t}\right)^{2}\left(1-\alpha_{t}^{2}\right)c}.
\end{align}

\paragraph{Importance sampling} 

While time score normalization yields stable training in general, we empirically observe that it may not always yield the best results. Specifically, when the variance of the time score is large, for instance, when $\alpha_{t}\rightarrow 1$, time score normalization results in heavy down weighting. In certain cases it is beneficial to employ a weighting scheme
that is approximately uniform over different values of $t$. 

Inspired by diffusion models literature \citep{Song2021mle}, we employ importance sampling. Specifically, samples of $t$ are drawn from another distribution $\tilde{p}(t)$,
\begin{equation}
\mathcal{L}(\btheta) 
= 
\E_{p(\bx, \bz), \tilde{p}(t)} \bigg[
\frac{\bar{\lambda}(t)}{\tilde{p}(t)} \big( \partial_{t}\log p_{t}(\bx|\bz) - s_{\btheta}(\bx, t) \big)^2 
\bigg],
\end{equation}
with the goal being that, ideally, $\frac{\bar{\lambda}(t)}{\tilde{p}(t)} = \lambda(t)$ and $\bar{\lambda}(t) \approx 1$. Further details on the employed importance sampling scheme can be found in Section~\ref{app:sec:importance-sampling}.

\section{Theoretical Guarantees}
\label{sec:theoretical_guarantees}

In this section, we provide theoretical guarantees on the density estimated by CTSM or CTSM-v. All proofs are included in Appendix~\ref{app:sec:theory}. 

In practice, we can approximate~\eqref{eq:main_identity} as
\begin{align}
    \label{eq:main_identity_approx}
    \log \hat{p}_1(\bx)
    = 
    \frac{1}{K} \sum_{i=1}^K \hat{s}(\bx, t_i)
    + 
    \log p_0(\bx),
\end{align}
introducing two sources of error, namely the error due to discretizing the integral with $K$ steps and the error due to using the approximate time score $\hat{s}(\bx, t)$. We quantify these errors in the following theorem. We focus on KL divergence for convenience of the derivations, and remark that the same proof can be used to bound the error between the learned and true density ratios.
\begin{theorem}[General error bound]
\label{theorem:error_bound}
Denote by $p_1$ and $\hat{p}_1$ the densities obtained from~\eqref{eq:main_identity} and~\eqref{eq:main_identity_approx}, using the true and approximate time scores, $s(\bx, t):=\partial_t \log p_t(\bx)$ and $\hat{s}(\bx, t)$ respectively. 
Assume that the correct time score evolves smoothly with time, specifically $t \mapsto s(\bx, t)$ is $L(\bx)$-Lipschitz. Denote as follows the time-discretized distribution $p_K(t) = \frac{1}{K} \sum_{i=1}^K \delta_{t_i}(t)$. The error between the two distributions $p_1$ and $\hat{p}_1$ is bounded as
\begin{align}
\label{eq:generic_error_bound}
\begin{split}
    \mathrm{KL}(p_1, \hat{p}_1)^2
    \leq
    \frac{1}{2 K^2}
    \E_{p_1(\bx)} [L(\bx)^2]
    \\
    +
    2
    \E_{p_1(\bx), p_K(t)}
    [
    \left( s(\bx, t) - \hat{s}(\bx, t) \right)^2
    ].
\end{split}
\end{align}
\end{theorem}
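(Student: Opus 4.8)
The plan is to reduce the bound to a pointwise-in-$\bx$ estimate on the log-density gap, and then split that gap into a discretization term and an approximation term that match the two terms of~\eqref{eq:generic_error_bound}. Since $\log p_1(\bx) = \int_0^1 s(\bx,t)\,dt + \log p_0(\bx)$ by~\eqref{eq:main_identity} and $\log \hat{p}_1(\bx) = \frac{1}{K}\sum_{i=1}^K \hat{s}(\bx,t_i) + \log p_0(\bx)$ by~\eqref{eq:main_identity_approx}, the additive $\log p_0$ cancels, and
\[
\mathrm{KL}(p_1, \hat{p}_1) = \E_{p_1(\bx)}\!\left[\log \frac{p_1(\bx)}{\hat{p}_1(\bx)}\right] = \E_{p_1(\bx)}[D(\bx)], \quad D(\bx) := \int_0^1 s(\bx,t)\,dt - \frac{1}{K}\sum_{i=1}^K \hat{s}(\bx,t_i).
\]
First I would apply Jensen's inequality to the convex map $u \mapsto u^2$ (equivalently Cauchy--Schwarz) to pull the square inside the expectation, giving $\mathrm{KL}(p_1,\hat{p}_1)^2 \leq \E_{p_1(\bx)}[D(\bx)^2]$. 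This reduces the whole problem to controlling $D(\bx)^2$ for each fixed $\bx$.

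Next I would write $D(\bx) = D_{\text{disc}}(\bx) + D_{\text{approx}}(\bx)$ with discretization error $D_{\text{disc}}(\bx) = \int_0^1 s(\bx,t)\,dt - \frac{1}{K}\sum_{i=1}^K s(\bx,t_i)$ and approximation error $D_{\text{approx}}(\bx) = \frac{1}{K}\sum_{i=1}^K\big(s(\bx,t_i) - \hat{s}(\bx,t_i)\big)$, and apply $(a+b)^2 \leq 2a^2 + 2b^2$. For the discretization term I would invoke the standard Riemann-sum error bound for an $L(\bx)$-Lipschitz integrand on a uniform grid of $K$ points: on each subinterval of width $1/K$ the endpoint-rule error is at most $L(\bx)/(2K^2)$, so summing the $K$ subintervals gives $|D_{\text{disc}}(\bx)| \leq L(\bx)/(2K)$ and hence $2 D_{\text{disc}}(\bx)^2 \leq L(\bx)^2/(2K^2)$; taking $\E_{p_1(\bx)}$ produces the first term of~\eqref{eq:generic_error_bound}. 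For the approximation term I would use Cauchy--Schwarz on the uniform average, $\big(\frac{1}{K}\sum_i a_i\big)^2 \leq \frac{1}{K}\sum_i a_i^2$, to obtain $D_{\text{approx}}(\bx)^2 \leq \E_{p_K(t)}\big[(s(\bx,t)-\hat{s}(\bx,t))^2\big]$, so that integrating $2 D_{\text{approx}}(\bx)^2$ against $p_1(\bx)$ yields exactly the second term $2\,\E_{p_1(\bx), p_K(t)}[(s-\hat{s})^2]$. Combining the two bounds gives~\eqref{eq:generic_error_bound}.

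The main obstacle I anticipate is not any individual calculation but the well-posedness of the very first identity: writing $\mathrm{KL}(p_1,\hat{p}_1) = \E_{p_1}[D]$ presumes $\hat{p}_1$ is a \emph{normalized} density, since otherwise $\E_{p_1}[\log(p_1/\hat{p}_1)]$ differs from a genuine KL divergence by the log normalizer and need not be nonnegative. I would handle this by either assuming $\hat{p}_1$ is renormalized --- the constant only shifts $\log \hat{p}_1$, and Gibbs' inequality then certifies the left-hand side as a bona fide nonnegative divergence --- or by stating the result for the unnormalized surrogate, noting that the Jensen step $(\E[D])^2 \leq \E[D^2]$ remains valid irrespective of the sign of $\E[D]$. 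A secondary point worth flagging is that the clean constant $1/(2K^2)$ relies on a uniform grid with $t_i$ at interval endpoints; a midpoint rule or general placement gives the same $O(1/K^2)$ rate with a smaller constant, so the stated bound is conservative.
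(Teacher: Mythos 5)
Your proposal is correct and follows essentially the same route as the paper's proof: Jensen's inequality to pull the square inside the expectation, the add-and-subtract decomposition into discretization and approximation errors with $(a+b)^2 \leq 2a^2+2b^2$, the Lipschitz Riemann-sum bound $|D_{\text{disc}}(\bx)| \leq L(\bx)/(2K)$, and Cauchy--Schwarz on the uniform average for the approximation term. Your closing remark about $\hat{p}_1$ possibly being unnormalized is a legitimate subtlety the paper glosses over, but as you note the Jensen step is valid regardless of sign, so it does not affect the argument.
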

The first term quantifies a discretization error of the integral: it is null when using discretization steps $K \rightarrow \infty$, or when using paths whose time-evolution $t \rightarrow p(\bx, t)$ is smooth, even stationary $L(\bx) \rightarrow 0$ for any point $\bx \in \R^d$ where the density is evaluated. Comparing the constants $L(\bx)$ of different probability paths is left for future work. 

The second term in~\eqref{eq:generic_error_bound} quantifies the estimation error of the time score, collected over the times $t_i$ where it is evaluated.
While such an estimation error is assumed to be constant in related works~\citep{deBortoli2022}, we specify it for both CTSM and CTSM-v in our next result.
\begin{proposition}[Error bound for CTSM and CTSM-v]
\label{proposition:error_bound_scores}
Now consider a parametric model for the time score, $s_{\vtheta}(\bx, t)$. Denote by $\btheta*$ the parameter for the actual time score $\partial_t \log p_t(\bx)$, obtained by minimizing the loss from~\eqref{eq:l2_loss_conditional}. Denote by $\hat{\btheta}$ the parameter obtained from minimizing that same loss when the expectation is approximated using a finite sample $(\bx_i, \bz_i, t_i)_{i \in \llbracket 1, N \rrbracket}$. Then, the expected error over all estimates $\hat{p_1}$, obtained by integrating the estimated score $s_{\hat{\btheta}}(\bx, t)$ over time, is
\begin{align}
\begin{split}
    &\E_{\hat{p}_1}[\mathrm{KL}(p_1, \hat{p}_1)^2]
    \leq
    \frac{1}{2 K^2}
    \E_{p_1(\bx)} [L(\bx)^2]
    \\
    &+
    \frac{2}{N} e(\btheta^*, \lambda, p)
    + 
    o\Big( \frac{1}{N} \Big),
\end{split}
\end{align}
Note that the expectation of the $\mathrm{KL}$ is taken over all estimates $\hat{p_1}$. The error function $e(\cdot)$ is specified in Appendix~\ref{app:sec:theory}, specifically~\eqref{eq:expected_squared_score_error}, with the matrices for CTSM specified in~\eqref{eq:ctsm_matrices} and the matrices for CTSM-v specified in~\eqref{eq:ctsm_v_matrices}. 
\end{proposition}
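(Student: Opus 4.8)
The plan is to combine the distribution-level bound of \cref{theorem:error_bound} with a classical asymptotic analysis of the M-estimator $\hat{\btheta}$. Starting from \cref{eq:generic_error_bound} and taking the expectation over the sampling randomness that produces $\hat{\btheta}$ (and hence $\hat{p}_1$), the deterministic first term is unaffected, so that, using that the true score equals $s_{\btheta^*}$,
\begin{align*}
\E_{\hat{p}_1}[\KL(p_1,\hat{p}_1)^2]
&\leq
\frac{1}{2K^2}\E_{p_1(\bx)}[L(\bx)^2]\\
&\quad+ 2\,\E_{\hat{\btheta}}\E_{p_1(\bx),p_K(t)}\big[(s_{\btheta^*}(\bx,t)-s_{\hat{\btheta}}(\bx,t))^2\big].
\end{align*}
The whole task then reduces to showing that the second term is $\tfrac{2}{N}e(\btheta^*,\lambda,p)+o(1/N)$.

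Next I would linearize the score model around the optimum. A first-order Taylor expansion $s_{\hat{\btheta}}(\bx,t)-s_{\btheta^*}(\bx,t)=\nabla_{\btheta}s_{\btheta^*}(\bx,t)^\top(\hat{\btheta}-\btheta^*)+R$, squared and integrated against $p_1(\bx)p_K(t)$, turns the leading contribution into the quadratic form $(\hat{\btheta}-\btheta^*)^\top\mG(\hat{\btheta}-\btheta^*)$ with $\mG:=\E_{p_1(\bx),p_K(t)}[\nabla_{\btheta}s_{\btheta^*}\nabla_{\btheta}s_{\btheta^*}^\top]$. Taking the trace, the second term equals $2\,\mathrm{tr}\big(\mG\,\E_{\hat{\btheta}}[(\hat{\btheta}-\btheta^*)(\hat{\btheta}-\btheta^*)^\top]\big)$ up to the contribution of the remainder $R$.

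I would then invoke standard M-estimation asymptotics for $\hat{\btheta}$, which minimizes the empirical version of \cref{eq:l2_loss_conditional}. Under the usual regularity conditions (smoothness in $\btheta$, a well-separated unique minimizer, an invertible population Hessian, and finite gradient covariance), $\sqrt{N}(\hat{\btheta}-\btheta^*)$ is asymptotically $\mathcal{N}(0,\mV)$ with the sandwich covariance $\mV=\mH^{-1}\mSigma\mH^{-1}$. Here the per-sample loss is $\ell(\btheta;\bx,\bz,t)=\lambda(t)(\partial_t\log p_t(\bx|\bz)-s_{\btheta}(\bx,t))^2$; its gradient has mean zero at $\btheta^*$ because $\E_{p_t(\bz|\bx)}[\partial_t\log p_t(\bx|\bz)]=\partial_t\log p_t(\bx)=s_{\btheta^*}(\bx,t)$ by \cref{eq:mixture_score}, and this same identity annihilates the second-derivative term in the population Hessian, leaving
\begin{align*}
\mH &= 2\,\E_{p(\bx,t)}\big[\lambda(t)\nabla_{\btheta}s_{\btheta^*}\nabla_{\btheta}s_{\btheta^*}^\top\big],\\
\mSigma &= 4\,\E_{p(\bx,\bz,t)}\big[\lambda(t)^2(\partial_t\log p_t(\bx|\bz)-s_{\btheta^*})^2\nabla_{\btheta}s_{\btheta^*}\nabla_{\btheta}s_{\btheta^*}^\top\big],
\end{align*}
which are exactly the matrices recorded in \cref{eq:ctsm_matrices}. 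Substituting $\E_{\hat{\btheta}}[(\hat{\btheta}-\btheta^*)(\hat{\btheta}-\btheta^*)^\top]=\mV/N+o(1/N)$ identifies $e(\btheta^*,\lambda,p)=\mathrm{tr}(\mG\mV)=\mathrm{tr}(\mG\,\mH^{-1}\mSigma\mH^{-1})$ and delivers the claimed $\tfrac{2}{N}e+o(1/N)$.

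Finally, the CTSM-v case follows from the identical argument applied to the vector-valued regression \cref{eq:full-conditional-time-score}: the target and $s_{\btheta}$ become $D$-dimensional, the outer products $\nabla_{\btheta}s_{\btheta^*}\nabla_{\btheta}s_{\btheta^*}^\top$ are replaced by the corresponding Jacobian Gram matrices (summed over components), and one arrives at the analogous matrices of \cref{eq:ctsm_v_matrices}. The main obstacle I anticipate is not the algebra but making the asymptotics rigorous: upgrading the in-probability Taylor/CLT statements to a genuine $o(1/N)$ control of the \emph{expected} squared error requires uniform integrability (finite higher moments of $\nabla_{\btheta}s_{\btheta^*}$ and integrable control of the remainder $R$), in addition to the standard consistency and stochastic-equicontinuity hypotheses. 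I would therefore state these as explicit regularity assumptions and cite the classical M-estimation theory rather than re-derive it.
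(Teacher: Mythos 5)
Your proposal is correct and follows essentially the same route as the paper: classical M-estimation asymptotics with the sandwich covariance $H^{-1}GH^{-1}$, propagated to the score error by a first-order expansion (the paper phrases this as the delta method), traced against $\E_{p_1(\bx),p_K(t)}[\nabla_{\btheta}s_{\btheta^*}\nabla_{\btheta}s_{\btheta^*}^\top]$, and substituted into the bound of Theorem~\ref{theorem:error_bound}. The only cosmetic difference is that you drop the second-derivative term from the population Hessian using $\E_{p_t(\bz|\bx)}[w(\bx,\bz,t)]=0$, whereas the paper's \eqref{eq:ctsm_matrices} retains that term explicitly (it vanishes in expectation, so the two agree).
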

Again, note that the final error decreases with the sample size $N$ and discretization steps $K$. Moreover, the estimation error of the time score depends on three design choices: the parameterization of the model $\btheta \rightarrow s_{\btheta}(\bx, t)$, the chosen probability path $p_t(\bx | \bz)$, and the weighting function $\lambda(t)$. Interestingly, there is an edge case that is \textit{independent of the parameterization of the score} (and therefore of the choice of neural network architecture) where the error is zero. That is when the conditional and marginal scores are equal for CTSM, $\partial_t \log p_t(\bx | \bz) = \partial_t \log p_t(\bx)$, and when the vectorized version of that statement 
$\partial_t \log p_t(x^i | \bx^{<i}, \bz) = \E_{p_{t}(\bz|\bx)}\left[\partial_t \log p_t(x^i | \bx^{<i}, \bz)\right]$ holds true for CTSM-v. Choosing paths that approximately verify these condition could reduce the estimation error and would be interesting future work.

\section{Experiments}
\label{sec:experiments}

To benchmark the accuracy of our CTSM objectives, we closely follow the experimental setup of~\citet{Rhodes2020} and \citet{choi2022densityratio} and also provide further experiments. Our code is available at \url{https://github.com/ksnxr/dre-prob-paths}.

We mainly compare with the TSM objective \citep{choi2022densityratio}, as it was shown to outperform baseline methods like NCE \citep{gutmann2012nce} and TRE \citep{Rhodes2020}. Unless otherwise specified, we use the same score network, VP path, and experimental setup as in~\citet{choi2022densityratio}.
In these experiments, the TSM estimator is obtained using Stein score normalization as in~\citet{choi2022densityratio}, while our CTSM estimators are always obtained using time score normalization; both weighting functions were defined in Section~\ref{sec:design_choices}. 
In fact, we consider time score normalization an integral part of the CTSM method instead of an optional add-on, and thus do not evaluate its effect separately.
Details on experiments are specified in  Appendix~\ref{app:sec:exp}.

Overall, these experiments show that vectorized CTSM achieves competitive or better performance to TSM but is orders of magnitude faster, especially in higher dimensions. We note the importance of our vectorized CTSM, as in preliminary experiments, the non-vectorized CTSM is essentially not trainable on MNIST.

\subsection{Evaluation Metrics} 
We follow the metrics established by prior work on density ratio estimation~\citep{Rhodes2020,choi2022densityratio}.

\paragraph{Mean-Squared Error of the density ratio.}
As a basic measure of estimation error, we approximate the following quantity $\E_{q(x)} \| \log \frac{p_1}{p_0}(x) - \widehat{\log \frac{p_1}{p_0}}(x) \|^2$ using Monte-Carlo. The distribution $q(x)$ is chosen to be the mixture $\frac{1}{2} p_0 + \frac{1}{2} p_1$ as in the implementation of~\citet{choi2022densityratio}.

\paragraph{Log-likelihood of the target distribution.} As a second measure of success, we approximate the following quantity $-\E_{p_1(\bx)}[\widehat{\log p_1}(\bx)]$  using Monte-Carlo. We report the result in bits per dimension (BPD), obtained by taking the negative log-likelihood, and then dividing by $D \log 2$ where $D$ is the dimensionality of the data. 
    
We note that the metric of log-likelihood should be interpreted with caution. While commonly reported in related literature~\citep{gao2019noiseadaptivence,Rhodes2020,choi2022densityratio,du2023energybasedmodel}, that same literature acknowledges that it is specifically designed to measure the likelihood of a normalized model. A model obtained through density-ratio estimation is only normalized in the limit of infinite samples and perfect optimization, meaning it may remain unnormalized in practice. In such cases, BPD becomes invalid because unnormalized models introduce an additive constant that distorts the BPD value.
Some literature attempts to address this by re-normalizing the learned model using estimates of the log normalizing constant~\citep{gao2019noiseadaptivence,Rhodes2020,choi2022densityratio,du2023energybasedmodel}.
However, our experiments show these estimates can be unreliable and may even worsen the unnormalization. For example, the Annealed Importance Sampling estimator~\citep{neal1998ais} produces highly variable log normalizing constants (e.g., ranging between $[-1100, 650]$ depending on the step size in the sampling method). Similarly, the Reverse Annealed Importance Sampling Estimator~\citep{burda2015raise} can be numerically unstable for realistic distributions, such as mixtures~\citep{du2023energybasedmodel}.

\subsection{Model Accuracy in Synthetic Distributions with High Discrepancies}

We consider synthetic data where two distributions have high discrepancies; this type of problem is considered in previous works \citep{choi2022densityratio} as it highlights the challenge of the density-chasm problem \citep{Rhodes2020}. For a fair comparison, we use the same model architecture, the same interpolation scheme and train for the same number of steps while tuning the learning rates for each scenario. 

\paragraph{Gaussians} 
Consider two distant Gaussians,
\begin{align}
    p_0(\bx) 
    &= 
    \mathcal{N}(\bx; [0, \ldots, 0]^\top,  \mI),
    \\
    p_1(\bx) 
    &= 
    \mathcal{N}(\bx; [4, \ldots, 4]^\top, \mI)
\end{align}
with varying dimensionality. Their density ratio is modeled by a fully-connected neural network ending with a linear layer. Results are reported in Figure~\ref{fig:gaussians}. We observe that our CTSM methods consistently improve upon TSM in terms of accuracy for the same number of iterations of the optimization algorithm. Moreover, a single iteration of the optimization algorithm is more than two times faster for our methods than for TSM: CTSM and CTSM-v take around $5$ms per iteration, against around $15$ms for TSM~\footnote{For this experiment, \citet{choi2022densityratio}'s implementation of TSM had a bug (see Appendix~\ref{app:sec:bug_tsm_toy}), thus the results that we report are better than the ones in their paper.}.

\begin{figure}[ht]
\begin{center}
\includegraphics[width=0.5\columnwidth]{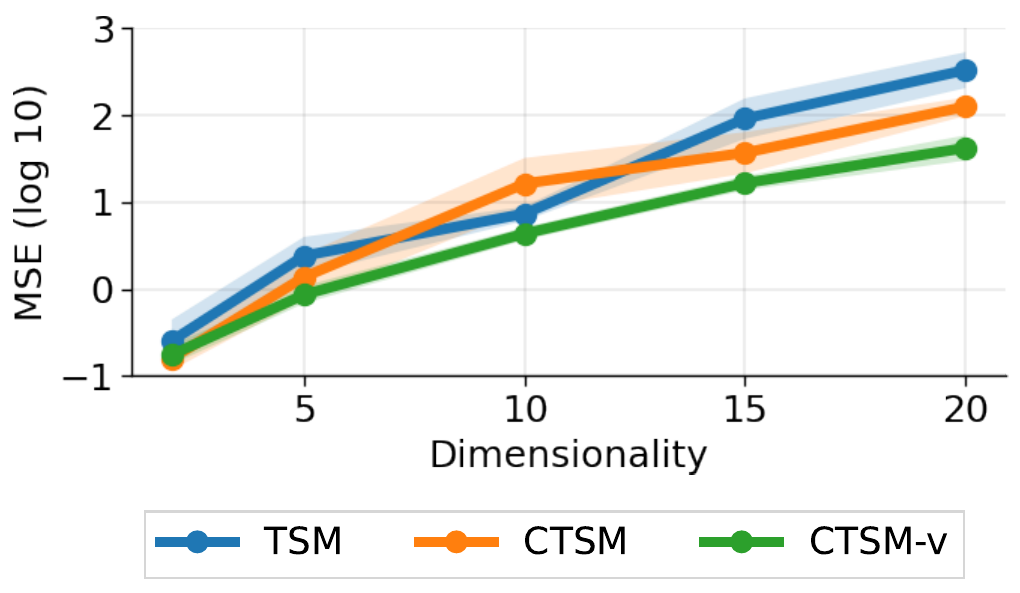}
\caption{
For estimating the density ratio between two Gaussians, CTSM-v outperforms other methods as the dimensionality increases. Full and shaded lines are respectively the means and standard deviations over $3$ runs. 
}
\label{fig:gaussians}
\end{center}
\end{figure}

\paragraph{Gaussian mixtures}
Consider two bi-modal Gaussian mixtures, centered at vectors of entries $\mathbf{2}$ and $\mathbf{-2}$,
\begin{align}
    p_0 
    &= 
    \frac{1}{2} 
    \mathcal{N}(\mathbf{2} - \frac{k\sigma}{2}, \sigma^{2}\mI)
    +
    \frac{1}{2} 
    \mathcal{N}(\mathbf{2} + \frac{k\sigma}{2}, \sigma^{2}\mI)
    \\
    p_1 
    &= 
    \frac{1}{2} 
    \mathcal{N}(-\mathbf{2} - \frac{k\sigma}{2}, \sigma^{2}\mI)
    +
    \frac{1}{2} 
    \mathcal{N}(-\mathbf{2} + \frac{k\sigma}{2}, \sigma^{2}\mI),
\end{align}
with $\sigma=\sqrt{\frac{4}{4+k^{2}}}$. We choose the distribution in this way, such that $k$ controls the between-mode distance as a multiple of $\sigma$, while either side has unit variance in each dimension.

In this experiment specifically, the default VP path~\eqref{eq:vp_path_simulation} cannot be used because $p_0$ is not Gaussian. We therefore use another path specified in Appendix~\ref{app:ssec:schrodinger_bridge_path}.

Results are reported in Appendix~\ref{app:sec:additional_experimental_results}. We observe that CTSM and CTSM-v are, again, significantly faster to run than TSM, while being able to achieve competitive performances within the same number of iterations.

\subsection{Mutual Information Estimation for High-Dimensional Gaussians}

Following \citet{Rhodes2020,choi2022densityratio}, we conduct an experiment where the goal is to estimate the mutual information between two high dimensional Gaussian distributions
\begin{align}
    p_0(\vx) 
    = 
    \mathcal{N}(\bx; \mathbf{0}, \mI)
    , \quad
    p_1(\vx) 
    = 
    \mathcal{N}(\bx; \mathbf{0}, \mSigma),
\end{align}
where $\mSigma$ is a structured matrix; specifically it is block-diagonal, where each block is $2 \times 2$ with $1$ on the diagonal and $0.8$ on the off-diagonal, thus making the ground truth MI a function of dimensionality. Their density ratio defines the mutual information between two random variables, $\vx$ restricted to even indices and $\vx$ restricted to odd indices, as explained in~\citet[Appendix D]{Rhodes2020}. Also following \citet{Rhodes2020,choi2022densityratio}, we directly parameterize a quantity related to the covariance; further details can be found in Appendix~\ref{sec:mi-estimation}.

Estimating the mutual information is a difficult task in high dimensions. Yet, as noted by \citet{choi2022densityratio}, TSM can efficiently do so. As shown in Figure~\ref{fig:mi} (right panel), all methods --- TSM, CTSM and CTSM-v --- can estimate the mutual information accurately after a sufficiently large number of optimization steps. However, CTSM-v is orders of magnitude faster to converge in terms of optimization step. What is more, each optimization step is consistently faster for CTSM and CTSM-v than TSM, and this effect is exacerbated in higher dimensions, as seen in Figure~\ref{fig:mi} (left panel). Overall, when running these methods with a fixed compute budget,   CTSM-v outperforms both CTSM and TSM, as seen in Figure~\ref{fig:mi} (middle panel). 

\begin{figure}[ht]
\vskip 0.2in
\begin{center}
\centerline{\includegraphics[width=\columnwidth]{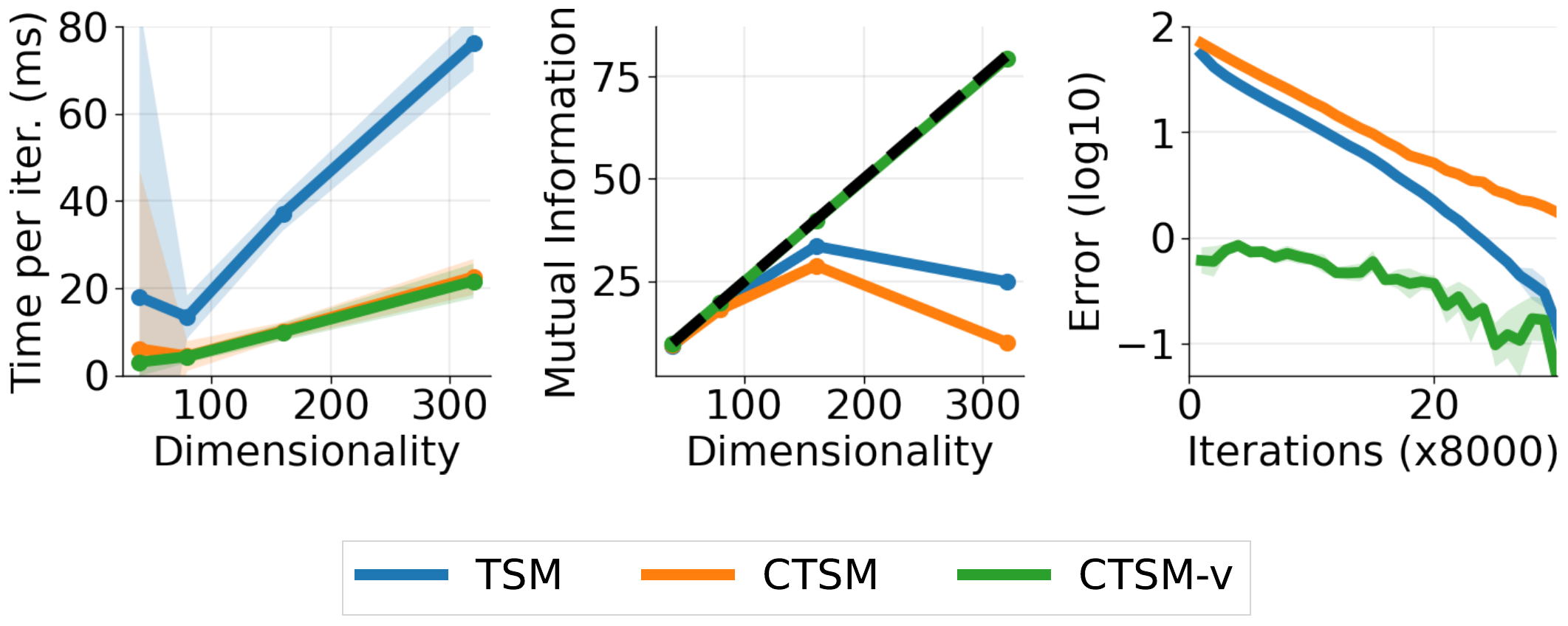}}

\caption{Mutual information estimation.
\textit{Left:} Time per iteration. 
\textit{Middle:} Estimated and true (in dashed black) Mutual Information for different dimensions, where we directly report the estimates obtained after a few thousand iterations (see Appendix, Table~\ref{tbl:mi-hyper}). 
\textit{Right:} Error between the estimated and true mutual information for dimensionality $320$, during the first steps of optimization. Full and shaded lines are respectively the means and standard deviations over $3$ runs.
}
\label{fig:mi}
\end{center}
\vskip -0.2in
\end{figure}

\subsection{Energy-based Modeling of Images}

\begin{table}[h]
\begin{center}
\caption{EBM results on MNIST. Training is done either in the ambient pixel space, or in a latent space obtained using a pre-trained Gaussian normalizing flow. CTSM-v can achieve comparable results as TSM, while being much faster. For BPD, lower is better.}
\begin{tabular}{|c|c|c|c|}
  \hline
  Space & Methods & Approx. BPD & Time per step 
  \\
  \hline
  \multirow{2}{*}{Latent} & TSM & 1.30 & 347 ms \\
  {} & CTSM-v & 1.26 & 58 ms \\
  \hline
  \multirow{2}{*}{Ambient} & TSM & unstable & 1103 ms \\
  {} & CTSM-v & 1.03 & 142 ms \\
  \hline
\end{tabular}
\end{center}
\end{table}

Similar to \citet{Rhodes2020} and \citet{choi2022densityratio}, we consider Energy-based Modeling (EBM) tasks on MNIST \citep{lecun2010mnist}.
Here, we have
\begin{align}
    p_0(\bx) 
    = 
    \mathcal{N}(\bx; \mathbf{0}, \mI),
    \quad
    p_1(\bx) 
    = 
    \pi(\bx)
    ,
\end{align}
where $\pi(\bx)$ is a distribution over images of digits. These images may be mapped back to an (approximately) normal distribution using a pre-trained normalizing flow (multivariate Gaussian normalizing flow). 

We note that in practice, CTSM could not be used for this task.
Hence, we compare CTSM-v with TSM. To model the vectorized time score used in CTSM-v,  we use the same, small U-Net architecture as in~\citet{choi2022densityratio}, with one modification: to condition the network on time, we use popular Fourier feature embeddings~\citep{tancik2020randomfourier,song2021sde} instead of linear embeddings as in~\citet{choi2022densityratio}. Preliminary experiments showed this led to more stable training and better final performance. 

Based on preliminary experiments, we employ importance sampling to adjust the effective weighting scheme. For the implementation of the TSM loss, we directly use the original code as provided by \citet{choi2022densityratio}. We remark that the exact speed naturally depends on both the score matching algorithm and implementation details, and in our case may also depend on the way that the flow is utilized; for details we refer readers to Section~\ref{app:sec:ebm}.

We observe that, CTSM objective can train models competitive to TSM, while being much faster. Annealed Importance Sampling, which has been used by previous works to verify the estimated log densities \citep{Rhodes2020,choi2022densityratio}, appears to be highly unstable for time score matching algorithms, with the estimated log constants varying significantly depending on the step size of HMC algorithm.

Additionally, unlike previous related work~\citep{Rhodes2020,choi2022densityratio}, we were able to use our algorithms to  successfully model MNIST images directly in the ambient pixel space. We employ as architecture a U-Net that is closer to the one used by \citet{song2021sde}. We observe that these ResNet~\citep{he2016resnet}-based architectures may result in unstable training for TSM, coinciding with the observation of \citet{choi2022densityratio}. Interestingly, the model achieves an approximate BPD value at $1.03$, surpassing the best reported results in \citet{choi2022densityratio} utilizing pre-trained flows.

\paragraph{Sampling}
So far, we have used the estimated time scores to compute the target density, but they can also be used to sample from the target. To do so, we run two sampling processes, annealed MCMC and the probability flow ODE~\citep{song2021sde}: both require computing the Stein scores $\nabla \log p_t(\bx)$. Based on~\eqref{eq:main_identity}, we relate these Stein scores to the time scores
\begin{align}
    \label{eq:stein_scores}
    \nabla \log p_t(\bx)
    =
    \nabla \int_0^t \partial_\tau \log p_\tau(\bx) d\tau
    + 
    \nabla \log p_0(\bx).
\end{align}
Stein scores are computed by differentiating through the time scores estimated using the U-Net trained directly in the pixel space. While the time scores themselves may be well-estimated, their gradients might not be, potentially leading to inaccurate Stein scores~\citep{liu2024wassersteingradientflow}. In turn, using inaccurate Stein scores in the sampling process can degrade sample quality~\citep{chen2023probabilityflowodetheory}. Yet, the generated samples in Figure~\ref{fig:mnist_samples} appear realistic, suggesting that, in practice, the Stein scores are well-estimated.

\begin{figure}[ht]
\vskip 0.2in
\begin{center}
    \begin{tabular}{cc}
       \includegraphics[width=0.4\columnwidth]{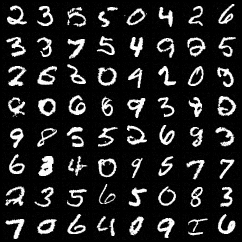} & \includegraphics[width=0.4\columnwidth]{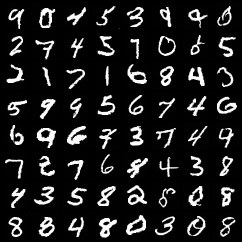} \\
    \end{tabular}
\caption{We report the samples obtained using ambient pixel space CTSM-v. Left: samples generated using annealed MCMC. Right: samples generated using probability flow ODE.}
\label{fig:mnist_samples}
\end{center}
\vskip -0.2in
\end{figure}

\section{Discussion}

\paragraph{Other estimators of time score}
In this paper, we compare time score estimators based on different learning objectives. An alternative is to use a simple Monte Carlo estimator, replacing the expectation in~\eqref{eq:mixture_score} with finite samples. Similarly, Monte Carlo methods can estimate other quantities like the Stein score~\citet{Scarvelis2024}, though they are rarely used in practice. Recent works suggest that estimators obtained by minimizing a learning objective are preferable when the neural network architecture is well-suited to modeling the Stein score~\citep{Kadkhodaie2024,kamb2024analytic}. A more careful exploration of these estimation methods is left for future work.

\paragraph{Connections with generative modeling literature}
The learning objectives in this paper rely on probability paths that can be explicitly decomposed into mixtures of simpler probability paths. We used such simpler paths to compute the time score in closed form. Related literature has used these simpler paths to compute other quantities in closed form, such as the Stein score $\partial_{\bx} \log p_t(\bx | \bz)$~\citep{song2021sde}, or the velocity~\citep{lipman2023conditionalflowmatching,liu2023,Albergo2023,pooladian2023conditionalflowmatching,tong2024conditionalflowmatching} which is a vector field that transports samples from $p_0$ to $p_1$. Despite the similarities, we learn a fundamentally different quantity and our method differs from the previous ones in terms of the weighting function and an additional vectorization technique.

\paragraph{Connections with multi-class classification}
Recent works have proposed to perform density ratio estimation by learning a multi-class classifier between \textit{all} intermediate distributions, instead of multiple binary classifiers between \textit{consecutive} intermediate distributions~\citep{Srivastava2023,yair2023multiclass,yadin2024classification}. Multi-class classification seems to empirically improve the estimation of the density ratio, but compared with TSM, it has limitations in high dimensions~\citep{Srivastava2023}. The limiting case where the intermediate distributions are infinitesimally close is an interesting direction for future work.

\paragraph{Optimal design choices}
In this work, we introduce novel estimators of the time score that depend on many design choices. One of them is the choice of probability path. \citet{Xu2024} considered using the learned approximate optimal transport path, \citet{Wu2025} considered using the learned approximate probability path given by annealing and \citet{Kimura2025} considered an information geometry formulation. Finding optimal probability paths, in the sense that the final error is minimized, is an active area of research, for example applied to estimating normalizing constants~\citet{Chehab2023optimizing},or sampling from challenging distributions~\citep{guo2025provablebenefitannealedlangevin}. Another important design choice is the weighting function that has been empirically investigated in related literature~\citep{kingma2023diffusion,chen2023noisescheduling}. A rigorous study of which design choice influences the final performance is left for future work.

\section{Conclusion} We propose a new method for learning density ratios. We address a number of problems in previous work \citep{Rhodes2020,choi2022densityratio} 
 that culminated in the TSM objective. First, TSM is computationally inefficient, second, the resulting estimator can be inaccurate, and third, the theoretical guarantees are not clear. Inspired by recent advances in diffusion models and flow matching, we propose the CTSM objective and directly address these three limitations. CTSM drastically reduces the running times while improving the estimation accuracy of the density ratio, especially in higher dimensions. Additionally, we develop techniques for increasing the numerical stability %
 through, for example, novel weighting functions. Finally, we provide theoretical guarantees on the resulting estimators.

\section*{Acknowledgements}

Hanlin Yu and Arto Klami were supported by the Research Council of Finland Flagship programme: Finnish Center for Artificial Intelligence FCAI, and by the grants 345811 and 363317. Aapo Hyv{\"a}rinen received funding from CIFAR. Omar Chehab and Anna Korba were supported by funding from the French ANR JCJC WOS. The authors wish to acknowledge CSC - IT Center for Science, Finland, for computational resources.

\section*{Impact Statement}
This paper presents work whose goal is to advance the field of 
Machine Learning. There are many potential societal consequences 
of our work, none of which we feel must be specifically highlighted here.

\vskip 0.2in
\bibliography{references}
\bibliographystyle{./style/icml2025_bib}

\newpage
\onecolumn

\appendix
\clearpage

\section*{Appendix}

The paper and appendix are organized as follows.

\renewcommand{\contentsname}{}
\vspace{-1cm}
\tableofcontents
\newpage

\section{Useful Identities}
\label{app:sec:identities}

We here organize useful identities that will be used to prove subsequent results in the form of a lemma.

\begin{lemma}[Variance of a specific random variable]
\label{lemma:variance_random_var}
Consider two independent random variables, $\bepsilon \sim \mathcal{N}(\bzero, \mI)$ and $\vx$ with mean $\bmu$ and covariance $\mSigma$. Then for scalars $a, b \in \R$,
\begin{align}
    \mathrm{Var}[
    a \norm{\bepsilon}^{2} + b\bepsilon^{\top}\bx
    ]
    =
    2 a^{2}D + b^{2}c D,
\end{align}
where $c = (\text{Trace}\left(\mSigma\right)+ \norm{\bmu}^{2})/D$ depends on the first two moments of $\bx$ and on the dimensionality $D$.
\end{lemma}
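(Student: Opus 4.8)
The plan is to exploit the independence of $\bepsilon$ and $\vx$, expand the variance of the sum into the two marginal variances plus a covariance cross-term, and then argue the cross-term vanishes by a parity (odd-moment) argument. First I would write
\begin{align}
\Var\!\left[a\norm{\bepsilon}^{2}+b\bepsilon^{\top}\vx\right]
= a^{2}\Var\!\left[\norm{\bepsilon}^{2}\right]
+ b^{2}\Var\!\left[\bepsilon^{\top}\vx\right]
+ 2ab\,\Cov\!\left[\norm{\bepsilon}^{2},\,\bepsilon^{\top}\vx\right],
\end{align}
so that it suffices to handle the three pieces separately.

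For the first piece, $\norm{\bepsilon}^{2}=\sum_{i=1}^{D}\epsilon_i^{2}$ is chi-squared with $D$ degrees of freedom, giving $\Var[\norm{\bepsilon}^{2}]=2D$ (equivalently from $\E[\epsilon_i^{2}]=1$ and $\E[\epsilon_i^{4}]=3$). For the second piece, since $\E[\bepsilon]=\bzero$ and $\bepsilon$ is independent of $\vx$, the mean $\E[\bepsilon^{\top}\vx]=0$, and
\begin{align}
\Var\!\left[\bepsilon^{\top}\vx\right]
= \E\!\left[(\bepsilon^{\top}\vx)^{2}\right]
= \sum_{i,j}\E[\epsilon_i\epsilon_j]\,\E[x_i x_j]
= \sum_{i}\E[x_i^{2}]
= \text{Trace}(\mSigma)+\norm{\bmu}^{2}
= cD,
\end{align}
where I use $\E[\epsilon_i\epsilon_j]=\delta_{ij}$ and $\E[x_i^{2}]=\mSigma_{ii}+\mu_i^{2}$.

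The only step requiring care is showing the covariance term is zero. Since $\E[\bepsilon^{\top}\vx]=0$, I need only establish that $\E[\norm{\bepsilon}^{2}\,\bepsilon^{\top}\vx]=0$. Expanding and invoking independence of $\bepsilon$ and $\vx$ yields
\begin{align}
\E\!\left[\norm{\bepsilon}^{2}\,\bepsilon^{\top}\vx\right]
= \sum_{i,j}\E[\epsilon_i^{2}\epsilon_j]\,\E[x_j],
\end{align}
and each factor $\E[\epsilon_i^{2}\epsilon_j]$ is an odd-order moment of the standard Gaussian: for $j\neq i$ it factors as $\E[\epsilon_i^{2}]\,\E[\epsilon_j]=0$, and for $j=i$ it equals $\E[\epsilon_i^{3}]=0$. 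Hence the covariance vanishes, and combining the three pieces gives $2a^{2}D+b^{2}cD$, as claimed. I expect no genuine obstacle; the parity argument killing the cross-term — which relies jointly on independence and the symmetry of the Gaussian — is the one spot where a naive bound would fail, so that is where I would be most careful.
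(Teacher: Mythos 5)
Your proposal is correct and follows essentially the same route as the paper's proof: both reduce the claim to the three moment computations $\mathrm{Var}[\norm{\bepsilon}^{2}]=2D$, $\E[(\bepsilon^{\top}\vx)^{2}]=\text{Trace}(\mSigma)+\norm{\bmu}^{2}$, and $\E[\norm{\bepsilon}^{2}\bepsilon^{\top}\vx]=0$ via vanishing odd Gaussian moments and independence. The only cosmetic difference is that you invoke the bilinearity of variance up front while the paper expands $\E[(a\norm{\bepsilon}^{2}+b\bepsilon^{\top}\vx)^{2}]$ directly and subtracts $(aD)^{2}$; the underlying calculations are identical.
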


\begin{proof}[Proof of Lemma~\ref{lemma:variance_random_var}]
$\norm{\bepsilon}^{2}$ follows a $\chi^{2}_{D}$-distribution, which has mean $D$ and variance $2D$.
\begin{align}
\E\left[\norm{\bepsilon}^{4}\right] &= \text{Var}\left[\norm{\bepsilon}^{2}\right] + \E\left[\norm{\bepsilon}^{2}\right]^{2} = 2D+D^{2},\\
\E\left[\bepsilon^{\top}\bx\right] &= \E\left[\sum_{i}\epsilon_{i}x_{i}\right] = \sum_{i}\E\left[\epsilon_{i}\right]\E\left[x_{i}\right]=0,\\
\E\left[x_{i}^{2}\right] &= \text{Var}\left[x_{i}\right]+\left(\E\left[x_{i}\right]\right)^{2}=\Sigma_{ii}+\mu_{i}^{2},\\
\E\left[\left(\bepsilon^{\top}\bx\right)^{2}\right] &= \E\left[\sum_{i,j}\epsilon_{i}x_{i}\epsilon_{j}x_{j}\right] = \sum_{i,j}\E\left[\epsilon_{i}x_{i}\epsilon_{j}x_{j}\right]= \sum_{i}\E\left[\epsilon_{i}^{2}x_{i}^{2}\right]\\
&= \sum_{i}\E\left[\epsilon_{i}^{2}\right]\E\left[x_{i}^{2}\right] = \sum_{i}\left(\Sigma_{ii}+\mu_{i}^{2}\right) = \text{Tr}(\bSigma) + \norm{\bmu}^{2},\\
\text{Var}\left[\bepsilon^{\top}\bx\right] &= \E\left[\left(\bepsilon^{\top}\bx\right)^{2}\right] - \left(\E\left[\bepsilon^{\top}\bx\right]\right)^{2}=\E\left[\left(\bepsilon^{\top}\bx\right)^{2}\right] = \text{Tr}(\bSigma) + \norm{\bmu}^{2},\\
\E\left[\norm{\bepsilon}^{2}\bepsilon^{\top}\bx\right] &= \E\left[\left(\sum_{i}\epsilon_{i}^{2}\right)\sum_{j}\epsilon_{j}x_{j}\right] = \E\left[\sum_{j}\epsilon_{j}^{3}x_{j}\right] + \E\left[\left(\sum_{i\neq j} \epsilon_{i}^{2}\right)\sum_{j}\epsilon_{j}x_{j}\right]\\
&= \left(\sum_{j}\E\left[\epsilon_{j}^{3}\right]\right)\E\left[x_{j}\right] + \E\left[\sum_{i\neq j} \epsilon_{i}^{2}\right]\sum_{j}\E\left[\epsilon_{j}\right]\E\left[x_{j}\right]=0,\\
\text{Var}\left[a\norm{\bepsilon}^{2} + b\bepsilon^{\top}\bx\right] &= \E\left[\left(a\norm{\bepsilon}^{2} + b\bepsilon^{\top}\bx\right)^{2}\right] - \left(\E\left[a\norm{\bepsilon}^{2} + b\bepsilon^{\top}\bx\right]\right)^{2}\\
&= \E\left[a^{2}\norm{\bepsilon}^{4} + 2ab\norm{\bepsilon}^{2}\bepsilon^{\top}\bx + b^{2}\left(\bepsilon^{\top}\bx\right)^{2}\right] - \left(\E\left[a\norm{\bepsilon}^{2} + b\bepsilon^{\top}\bx\right]\right)^{2}\\
&= a^{2}\left(2D+D^{2}\right) + 0 +b^{2}\left(\text{Tr}(\bSigma)+\norm{\bmu}^{2}\right) - \left(aD\right)^{2} = 2a^{2}D + b^{2}\left(\text{Tr}(\bSigma)+\norm{\bmu}^{2}\right)\\
&= 2a^{2}D + b^{2}cD.
\end{align}
\end{proof}

\newpage
\section{Probability Paths}
\label{app:sec:mixture}

\paragraph{Definition} 
In this paper, we consider probability paths $p_t(\bx)$ that are explicitly decomposed as a mixture of simpler probability paths $p_t(\bx | \bz)$, where $\bz$ indexes the mixture. Formally, this is written as
\begin{align}
    p_{t}(\bx) 
    =
    \E_{p(\vz)}[p_{t}(\bx|\bz)]
    = 
    \E_{p(\vz)}\left[\mathcal{N}(\bx ; \bmu_{t}(\bz),k_{t}\bI)\right].
\end{align}

The conditional paths are chosen to be Gaussian $p_{t}(\bx|\bz) = \mathcal{N}(\bx ; \bmu_{t}(\bz),k_{t}\bI)$. We will specify popular choices of $\bz$, $\bmu_t(\bz)$, and $k_t$ in Sections~\ref{app:ssec:vp_path} and~\ref{app:ssec:schrodinger_bridge_path}.

\paragraph{Time score}

We have
\begin{align}
\log p_{t}(\bx|\bz) &= -\frac{1}{2}\log\det(k_{t}\bI) - \frac{1}{2}\left(\bx-\bmu_{t}\right)^{\top}k_{t}^{-1}\left(\bx-\bmu_{t}\right) + \text{const}\\
&= -\frac{1}{2}\log(k_{t}^{D})-\frac{1}{2k_{t}}\norm{\bx-\bmu_{t}}^{2} + \text{const}\\
&= -\frac{1}{2}D\log(k_{t}) -\frac{1}{2k_{t}}\norm{\bx-\bmu_{t}}^{2} + \text{const},\\
\partial_{t}\log p_{t}(\bx|\bz) &= -\frac{1}{2}D\frac{\partial_{t}k_{t}}{k_t}-\frac{1}{2}\left(-\frac{\partial_{t}k_{t}}{k_{t}^{2}}\norm{\bx-\bmu_t}^{2}+k_{t}^{-1}2\left(\bx-\bmu_t\right)^{\top}\left(-\partial_{t}\bmu_{t}\right)\right)\\
&= -\frac{1}{2}D\frac{\partial_{t}k_{t}}{k_{t}} + \frac{\partial_{t}k_{t}}{2 k_{t}^{2}}\norm{\bx-\bmu_{t}}^{2} + \frac{1}{k_{t}}\left(\bx-\bmu_{t}\right)^{\top}\partial_{t}\bmu_{t}.
\end{align}

As such, the time score is
\begin{align}
    \label{eq:time_score_gaussian_case}
    \partial_t \log p_t(\bx | \bz) 
    =
    \frac{-D \dot{k}_t}{2 k_t}
    +
    \frac{1}{\sqrt{k_{t}}}\dot{\bmu}_t^\top \bepsilon_t(\bx, \bz)
    +
    \frac{\dot{k}_t }{2 k_{t}} 
    \norm{\bepsilon_t(\bx, \bz)}^2
    , \qquad
    \bepsilon_t(\bx, \bz) = \frac{1}{\sqrt{k_t}} (\bx - \bmu_t(\vz)).
\end{align}

In fact, we can formally write the time score without the conditioning variable,
\begin{align}
    \partial_t \log p_t(\bx)
    = 
    \E_{p_t(\bz | \bx)}[
    \partial_t \log p_t(\bx | \bz)
    ]
    , \quad
    p_t(\bz | \bx) 
    \propto 
    p(\bz)
    \exp\bigg(
    -\frac{1}{2 k_{t}} \norm{\bx-\bmu_{t}(\bz)}^{2}
    \bigg).
\end{align}

\paragraph{Vectorized time score} The vectorized version is simply given by
\begin{equation}
\text{vec}(\partial_{t} \log p_{t}(\bx|\bz)) = \frac{-\dot{k}_t}{2 k_t}
    +
    \frac{1}{\sqrt{k_{t}}}\dot{\bmu}_t \bepsilon_t(\bx, \bz)
    +
    \frac{\dot{k}_t }{2 k_{t}} 
    {\bepsilon_t(\bx, \bz)}^2,
\end{equation}
which can be obtained by decomposing the time score as the sum of $D$ terms.

\paragraph{Stein score}
The Stein score is \citep{kingma2023diffusion}
\begin{align}
    \label{eq:stein_score_gaussian_case}
    \partial_{\bx} \log p_t(\bx | \bz)
    =
    -\frac{1}{\sqrt{k_t}}
    \bepsilon_t(\bx, \bz)
    , \qquad
    \bepsilon_t(\bx, \bz) = \frac{1}{\sqrt{k_t}} (\bx - \bmu_t(\vz)).
\end{align}

\paragraph{Stein score normalization}

Observe that for a fixed $t$, $\bepsilon$ is, by definition, sampled from a standard normal distribution. As such, the Stein score in Equation~\eqref{eq:stein_score_gaussian_case} has variance $\frac{1}{k_{t}}$. The Stein score normalization in~\eqref{eq:path_var} is therefore given by
\begin{align}
     \lambda(t) 
     \propto 
     k_{t}.
\end{align}

\subsection{Variance-Preserving Probability Path}
\label{app:ssec:vp_path}

\paragraph{Simulating the path}
This path is simulated by interpolating the random variables $(\bx_0, \bx_1) \sim p_0 \otimes p_1$,
\begin{align}
    \bx
    = 
    \alpha_{t} \bx_1 
    +
    \sqrt{1 - \alpha_t^2} \bx_0.
\end{align}

\paragraph{Definition}
Conditioning on $t$ and $\bz = \bx_1$, and choosing a Gaussian reference distribution $p_0(\bx) = \mathcal{N}(\bx; 0, I)$, yields
\begin{align}
    \bmu_t(\bz) = \alpha_t \bx_1,
    \quad
    k_t = 1 - \alpha_t^2.
\end{align}
These choices define a popular probability path, sometimes called ``variance-preserving" as the variance of $p_t(\vx)$ is constant for all $t \in [0, 1]$~\citep{sohl-dickstein2015deep,ho2020ddpm,song2021sde,lipman2023conditionalflowmatching}. This path is in fact the default choice in the work most related to ours~\citep{choi2022densityratio}. In the above, $\alpha_t$ is positive and increasing, such that $\alpha_0 = 0$ and $\alpha_1 = 1$. It is sometimes referred to as the noise schedule~\citep{chen2023noisescheduling}. Popular choices include exponential $\alpha_t = \min(1, e^{-2(T-t)})$~\citep{song2021sde} for some fixed $T \geq 0$, or linear $\alpha_t = \min(1, t)$ functions~\citep{Albergo2023,gao2023gaussianinterpolant}.

We remark that in diffusion models literature \citep{song2021sde}, $p_{0}$ denotes data and $p_{1}$ denotes noise. We follow the flow matching convention, and use $p_{0}$ to denote noise and $p_{1}$ to denote data.

\paragraph{Stein score}
The resulting Stein score from~\eqref{eq:stein_score_gaussian_case} is
\begin{align}
    \partial_{\bx}\log p(\bx|\bz) 
    =
    -\frac{1}{\sqrt{1-\alpha_{t}^{2}}}\bepsilon_{t}(\bx,\bz).
\end{align}

\paragraph{Stein score normalization}
We have
\begin{equation}
\lambda(t) \propto 1-\alpha_{t}^{2}.
\end{equation}

\paragraph{Time score}
The resulting time score from~\eqref{eq:time_score_gaussian_case} is
\begin{align}
    \partial_{t}\log p_{t}(\bx|\bz)
    &= 
    D\frac{\alpha_{t}\alpha'_{t}}{1-\alpha_{t}^{2}} - 
    \frac{\alpha_{t}\alpha'_{t}}{\left(1-\alpha_{t}^{2}\right)^{2}}\norm{\bx-\alpha_{t}\bx_{1}}^{2}
    +
    \frac{1}{1-\alpha_{t}^{2}}\left(\bx-\alpha_{t}\bx_{1}\right)^{\top}\alpha'_{t}\bx_{1}\\
    &=D\frac{\alpha_{t}\alpha'_{t}}{1-\alpha_{t}^{2}} - \frac{\alpha_{t}\alpha'_{t}}{1-\alpha_{t}^{2}}\norm{\bepsilon}^{2} + \frac{1}{\sqrt{1-\alpha_{t}^{2}}}\bepsilon^{\top}\alpha'_{t}\bx_{1}.
\end{align}

\paragraph{Vectorized time score}
The vectorized version of the time score can be obtained as
\begin{align}
    \partial_{t}\log p_{t}(\bx|\bz)
    &= 
    D\frac{\alpha_{t}\alpha'_{t}}{1-\alpha_{t}^{2}} - 
    \frac{\alpha_{t}\alpha'_{t}}{\left(1-\alpha_{t}^{2}\right)^{2}}\left(\bx-\alpha_{t}\bx_{1}\right)^{2}
    +
    \frac{1}{1-\alpha_{t}^{2}}\left(\bx-\alpha_{t}\bx_{1}\right)\alpha'_{t}\bx_{1}\\
    &=D\frac{\alpha_{t}\alpha'_{t}}{1-\alpha_{t}^{2}} - \frac{\alpha_{t}\alpha'_{t}}{1-\alpha_{t}^{2}}\bepsilon^{2} + \frac{1}{\sqrt{1-\alpha_{t}^{2}}}\bepsilon\alpha'_{t}\bx_{1}.
\end{align}

\paragraph{Time score normalization}
We have
\begin{align}
    \mathrm{Var}_{p_t(\bz, \bx)}[
    \partial_{t}\log p_{t}(\bx|\bz)
    ]
    = \frac{
    2\alpha_{t}^{2}\left(\alpha'_{t}\right)^{2}
    +
    \left(\alpha'_{t}\right)^{2}\left(1-\alpha_{t}^{2}\right)c
    }
    {
    \left(1-\alpha_{t}^{2}\right)^{2}
    }.
\end{align}
where $c = (\text{Trace}\left(\mSigma\right)+ \norm{\bmu}^{2})/D$ depends on the mean $\bmu$, covariance $\mSigma$ and dimensionality $D$ of $\bx$. 

To compute the variance of the time score, observe that the first term is deterministic and therefore does not participate in the computation of the variance. To obtain the variance of the two remaining terms, we apply Lemma~\ref{lemma:variance_random_var} with $a=-\frac{\alpha(t)\alpha'(t)}{1-\alpha(t)^{2}}$ and $b=\frac{1}{\sqrt{1-\alpha(t)^{2}}}$. 

Interestingly, the variance can explode $\mathrm{Var}\left[\partial_{t}\log p_{t}(\bx|\bz)\right] \rightarrow \infty$ near the target distribution $\alpha(t) \rightarrow 1$.

\subsection{Schrödinger Bridge Probability Path}
\label{app:ssec:schrodinger_bridge_path}

\paragraph{Simulating the path}
This path is simulated by interpolating the random variables $(\bx_0, \bx_1) \sim \pi(\bx_0, \bx_1)$, generated from a coupling $\pi$ of the marginals $p_0$ and $p_1$, and adding Gaussian noise $\bepsilon \sim \mathcal{N}(\bzero, \mI)$ between the endpoints,
\begin{align}
    \bx
    = 
    t \bx_1 
    +
    (1 - t) \bx_0
    +
    \sigma \sqrt{t (1 - t)}
    \bepsilon.
\end{align}
\paragraph{Definition}
Conditioning on $t$ and $\bz = (\bx_1, \bx_0)$, yields
\begin{align}
    \mu_t(z) = (1 - t) \bx_0 + t \bx_1,
    \quad
    k_t = t (1 - t) \sigma^{2}.
\end{align}
These choices define another path of distributions in the literature. Typically, the coupling from which $\bz$ is drawn is either the product distribution $p_0 \otimes p_1$ or a coupling $\pi$ that satisfies optimal transport.  In the latter case, the ensuing path is known as a Schrödinger bridge~\citep{Follmer1988,tong2024schrodingerbridge}. In practice, the optimal transport coupling can be approximated using limited samples from both $p_0$ and $p_1$~\citep{pooladian2023conditionalflowmatching,tong2024conditionalflowmatching}. For simplicity, we use the product distribution. Note that for this path, $p_0$ need not be a Gaussian. 

When using independent couplings with $p_{0}$ and $p_{1}$ having equal variance $var$, arguably the most natural choice of $\sigma$ is to set $\sigma = \sqrt{2 var}$. In this case, the variance is preserved along the path. In order to see that, observe that under this setting the variance of $\bx_{t}$ is given by
\begin{equation*}
t^{2}var + (1-t)^{2}var + 2t(1-t)var = var.
\end{equation*}
However, empirically one may achieve better results with other choices of $\sigma$.

\paragraph{Stein score}
The resulting Stein score from~\eqref{eq:stein_score_gaussian_case} is
\begin{align}
    \partial_{\bx}\log p(\bx|\bz) 
    =
    -\frac{1}{\sigma\sqrt{t(1-t)}}\bepsilon_{t}(\bx,\bz).
\end{align}

\paragraph{Stein score normalization}
The Stein score normalization is given by
\begin{equation}
\lambda(t) \propto t(1-t)\sigma^{2}.
\end{equation}

\paragraph{Time score}
The resulting time score from~\eqref{eq:time_score_gaussian_case} is
\begin{align}
    \partial_{t}\log p_{t}(\bx|\bz)
    &= 
    -\frac{1}{2}D\frac{1-2t}{t(1-t)} + \frac{1-2t}{2\left(t(1-t)\right)^{2}\sigma^{2}}\norm{\bx-\left(1-t\right)\bx_{0}-t\bx_{1}}^{2}\\
&\quad +\frac{1}{t(1-t)\sigma^{2}}\left(\bx-\left(1-t\right)\bx_{0}-t\bx_{1}\right)^{\top}\left(\bx_{1}-\bx_{0}\right)\\
&= -\frac{1}{2}D\frac{1-2t}{t(1-t)} + \frac{1-2t}{2t(1-t)}\norm{\bepsilon}^{2} +\frac{1}{\sqrt{t(1-t)}\sigma}\bepsilon^{\top}\left(\bx_{1}-\bx_{0}\right).
\end{align}

\paragraph{Vectorized time score}
The vectorized time score is given by
\begin{align}
    \text{vec}(\partial_{t}\log p_{t}(\bx|\bz))
    &= 
    -\frac{1}{2}D\frac{1-2t}{t(1-t)} + \frac{1-2t}{2\left(t(1-t)\right)^{2}\sigma^{2}}\left(\bx-\left(1-t\right)\bx_{0}-t\bx_{1}\right)^{2}\\
&\quad +\frac{1}{t(1-t)\sigma^{2}}\left(\bx-\left(1-t\right)\bx_{0}-t\bx_{1}\right)\left(\bx_{1}-\bx_{0}\right)\\
&= -\frac{1}{2}D\frac{1-2t}{t(1-t)} + \frac{1-2t}{2t(1-t)} \bepsilon^{2} +\frac{1}{\sqrt{t(1-t)}\sigma}\bepsilon\left(\bx_{1}-\bx_{0}\right).
\end{align}

\paragraph{Time score normalization}
To compute the variance, treat $\frac{\bx_{1}-\bx_{0}}{\sigma}$ as a random variable with mean $\bmu$ and covariance $\bSigma$, we observe that, similar to VP path, it can be written as $a=\frac{1-2t}{2t(1-t)}$ and $b=\frac{1}{\sqrt{t(1-t)}}$.

We have
\begin{align}
    \mathrm{Var}_{p_t(\bz, \bx)} [\partial_{t}\log p_{t}(\bx|\bz)]
    =     
    \frac{1-4t+4t^{2}+2ct-2ct^{2}}{2t^{2}(1-t)^{2}}D.
\end{align}

Note that as $t$ approaches $0$ or $1$, the variance may be infinite.

\newpage
\section{Weighting Scheme}
\label{app:sec:weighting}

\subsection{Details on Importance Sampling}
\label{app:sec:importance-sampling}

We consider the simple VP path, given by $\bx = t\bx_{1} + \sqrt{1-t^{2}}\bx_{0}$, where $\bx_{0}$ is standard Gaussian, and $\bx_{1}$ is a distribution with $c=1$. One divided by the time score normalization is given by $\frac{1+t^{2}}{\left(1-t^{2}\right)^{2}}$. Treating this as an unnormalized probability density defined between $0$ and $t_{1}$, one can derive that the normalization constant is given by $Z=\frac{t_{1}}{1-t_{1}^{2}}$, and the CDF is given by $y(t) = \frac{1}{Z}\frac{t}{1-t^{2}}$. We can calculate the inverse CDF as
\begin{equation}
\frac{-1+\sqrt{1+4y^{2}Z^{2}}}{2yZ} = \frac{2yZ}{\sqrt{1+4y^{2}Z^{2}}+1}.
\end{equation}
As such, we can draw samples between $0$ and $t_{1}$ using the inverse CDF transform. Re-normalize $t_{1}$ to lie between $0$ and $1-\eps$ yields the final samples.

In practice, we choose $t_{1}=0.9$ and employ this heuristic scheme for EBM experiments though we are using a different variant of the VP path.

\section{Theoretical Results}
\label{app:sec:theory}

\subsection{Proof of ~\eqref{eq:mixture_score}}
\label{app:sec:mixture_score}

\begin{proof}[Proof of ~\eqref{eq:mixture_score}]
The derivations are similar to denoising score matching \citep{vincent2011denoisingscorematching,Bortoli2024}.

We wish to relate the time score $\partial_t \log p_t(\bx)$ and the conditional time score $\partial_t \log p_t(\bx | \bz)$. 

We have
\begin{align}
p_{t}(\bx) 
&= 
\int p_{t}(\bx|\bz)p(\bz)\dd\bz,
\\
\partial_{t} p_{t}(\bx) 
&= 
\int \partial_{t} p_{t}(\bx|\bz)p(\bz)\dd\bz =
\int \partial_{t} \log p_{t}(\bx|\bz) p_t(\bx | \bz) p(\bz)\dd\bz,
\end{align}
therefore
\begin{align}
\partial_{t} \log p_{t}(\bx) &= 
\frac{\partial_t p_t(\bx)}
{p_t(\bx)}
=
\int \partial_{t} \log p_{t}(\bx|\bz) \frac{p_{t}(\bx|\bz)p(\bz)}{p_{t}(\bx)} \dd\bz
=
\int \partial_{t} \log p_{t}(\bx|\bz) p_t(\bz | \bx) \dd\bz.
\end{align}
\end{proof}

\subsection{Proofs of Theorems~\ref{theorem:ctsm_objective}, \ref{theorem:ctsm_v_objective} and~\ref{theorem:marginal_vs_condition_regression}}
\label{app:sec:theorem3_proof}

We note that Theorems~\ref{theorem:ctsm_objective} and~\ref{theorem:ctsm_v_objective} are special cases of Theorem~\ref{theorem:marginal_vs_condition_regression}. 

For Theorem~\ref{theorem:ctsm_objective}, $\bbf(\bx,t|\bz)= \partial_{t}\log p_{t}(\bx|\bz)$, in which case $\bg(\bx,t)= \E_{p_{t}(\bz|\bx)}\left[\partial_{t}\log p_{t}(\bx|\bz)\right] = \partial_{t}\log p_{t}(\bx)$, i.e. the time score itself. 

For Theorem~\ref{theorem:ctsm_v_objective}, $\bbf(\bx,t|\bz)=\text{vec}(\partial_{t}\log p_{t}(\bx|\bz))$, in which case $\bg(\bx,t) = \E_{p_{t}(\bz|\bx)}\left[\text{vec}(\partial_{t}\log p_{t}(\bx|\bz))\right]$. It is clear that 
\begin{equation}
\sum_{i}\E_{p_{t}(\bz|\bx)}\left[\partial_t \log p_t(x^i | \bx^{<i}, \bz)\right]=\E_{p_{t}(\bz|\bx)}\left[\sum_{i}\partial_t \log p_t(x^i | \bx^{<i}, \bz)\right] = \E_{p_{t}(\bz|\bx)}\left[\partial_{t}\log p_{t}(\bx|\bz)\right] = \partial_{t}\log p_{t}(\bx),
\end{equation}
i.e. the sum of $\bg(\bx,t)$ gives the time score.

We prove Theorem~\ref{theorem:marginal_vs_condition_regression} in what follows.

\begin{proof}[Proof of Theorem~\ref{theorem:marginal_vs_condition_regression}]

The derivations are similar to \citet{lipman2023conditionalflowmatching,tong2024conditionalflowmatching}. First, we compute the gradients of both cost functions, $J_{\bg}$ and $J_{\bbf}$.
\begin{align}
\nabla_{\btheta}J_{\bg}(\btheta) &= \nabla_{\btheta}\E_{p(t),p_{t}(\bx)}\left[\lambda(t)\norm{\bg(\bx,t) - \bs_{\btheta}(\bx,t)}^2\right]\\
&= \nabla_{\btheta}\E_{p(t),p_{t}(\bx)}\left[\lambda(t)\left(\norm{\bg(\bx,t)}^{2} - 2 \left\langle\bg(\bx,t),\bs_{\btheta}(\bx,t)\right\rangle + \norm{\bs_{\btheta}(\bx,t)}^{2}\right)\right]\\
&= \nabla_{\btheta}\E_{p(t),p_{t}(\bx)}\left[\lambda(t)\left( - 2 \left\langle\bg(\bx,t),\bs_{\btheta}(\bx,t)\right\rangle + \norm{\bs_{\btheta}(\bx,t)}^{2}\right)\right].
\end{align}
\begin{align}
\nabla_{\btheta}J_{\bbf}(\btheta) &= \nabla_{\btheta}\E_{p(t),p(\bz),p_{t}(\bx|\bz)}\left[\lambda(t)\norm{\bbf(\bx,t|\bz) - \bs_{\btheta}(\bx,t)}^2\right]\\
&= \nabla_{\btheta}\E_{p(t),p(\bz),p_{t}(\bx|\bz)}\left[\lambda(t)\left(\norm{\bbf(\bx,t|\bz)}^{2} - 2 \left\langle\bbf(\bx,t|\bz), \bs_{\btheta}(\bx,t)\right\rangle + \norm{\bs_{\btheta}(\bx,t)}^{2}\right)\right]\\
&= \nabla_{\btheta}\E_{p(t),p(\bz),p_{t}(\bx|\bz)}\left[\lambda(t)\left(- 2 \left\langle\bbf(\bx,t|\bz),\bs_{\btheta}(\bx,t)\right\rangle + \norm{\bs_{\btheta}(\bx,t)}^{2}\right)\right].
\end{align}
We then proceed to show that the two terms coincide:
\begin{align}
\E_{p_{t}(\bx)} \norm{\bs_{\btheta}(\bx,t)}^{2} &= \E_{p(\bz)p_{t}(\bx|\bz)} \norm{\bs_{\btheta}(\bx,t)}^{2},\\
\bg(\bx,t) &= \int \frac{p_{t}(\bx|\bz)p(\bz)}{p_{t}(\bx)} \bbf(\bx,t|\bz)\dd\bz,\\
\E_{p_{t}(\bx)}\left\langle \bg(\bx,t), \bs_{\btheta}(\bx,t) \right\rangle &= \E_{p_{t}(\bx)} \left\langle \int \frac{p_{t}(\bx|\bz)p(\bz)}{p_{t}(\bx)} \bbf(\bx,t|\bz)\dd\bz,\bs_{\btheta}(\bx,t) \right\rangle\\
&= \int \left\langle \int \frac{p_{t}(\bx|\bz)p(\bz)}{p_{t}(\bx)} \bbf(\bx,t|\bz)\dd\bz,\bs_{\btheta}(\bx,t) \right\rangle p_{t}(\bx) \dd\bx\\
&= \int \left\langle \int p_{t}(\bx|\bz)p(\bz) \bbf(\bx,t|\bz)\dd\bz,\bs_{\btheta}(\bx,t) \right\rangle \dd\bx\\
&= \int\int \left\langle\bbf(\bx,t|\bz),\bs_{\btheta}(\bx,t) \right\rangle p_{t}(\bx|\bz)p(\bz) \dd\bz \dd\bx.
\end{align}
\end{proof}

\subsection{Proof of Theorem~\ref{theorem:error_bound}}

\begin{proof}[Proof of Theorem~\ref{theorem:error_bound}]

We have
\begin{align}
    \mathrm{KL}(p_1, \hat{p}_1)^2
    &= 
    \left(
    \E_{p_1(\bx)} \left[
    \log p_1(\bx) - \log \hat{p}_1(\bx)
    \right]
    \right)^2
    \\
    &\leq
    \E_{p_1(\bx)} \left[
    (\log p_1(\bx) - \log \hat{p}_1(\bx))^2
    \right]
    \\
    &=
    \E_{p_1(\bx)} \left[
    \left(
    \int_0^1 s(\bx, t)dt - 
    \frac{1}{K}\sum_{i=1}^K \hat{s}(\bx, t_i)
    \right)^2
    \right]
    \\
    &=
    \E_{p_1(\bx)} \left[
    \left(
    \int_0^1 s(\bx, t)dt - 
    \frac{1}{K}\sum_{i=1}^K s(\bx, t_i)
    +
    \frac{1}{K}\sum_{i=1}^K s(\bx, t_i)
    -
    \frac{1}{K}\sum_{i=1}^K \hat{s}(\bx, t_i)
    \right)^2
    \right]
    \\
    &\leq
    \E_{p_1(\bx)} \left[
    2
    \left(
    \int_0^1 s(\bx, t)dt 
    - 
    \frac{1}{K}\sum_{i=1}^K s(\bx, t_i)
    \right)^2
    +
    2
    \left(
    \frac{1}{K}\sum_{i=1}^K s(\bx, t_i)
    -
    \frac{1}{K}\sum_{i=1}^K \hat{s}(\bx, t_i)
    \right)^2
    \right]
    \\
    &\leq
    \E_{p_1(\bx)} \left[
    2
    \left(
    \frac{L(\bx)}{2K}
    \right)^2
    +
    2
    \frac{1}{K}\sum_{i=1}^K \left( 
    s(\bx, t_i) - \hat{s}(\bx, t_i)
    \right)^2
    \right]
    \\
    &=
    \frac{1}{2K^2}\E_{p_1(\bx)}[L(\bx)^2]
    +
    2
    \E_{p_1(\bx), p_K(t)} \left[
    \left(
    s(\bx, t) - \hat{s}(\bx, t)
    \right)^2
    \right],
\end{align}
where we used Jensen's inequality and bound the discretization error of a Riemannian integral using the left rectangular sum.

\end{proof}

\subsection{Proof of Proposition~\ref{proposition:error_bound_scores}}

\begin{proof}[Proof of Proposition~\ref{proposition:error_bound_scores}]

Denote by $s(\bx, \bz, t) = \partial_{t}\log p_{t}(\bx|\bz)$ 
the conditional score and by $l_{\theta}(\bx, \bz, t) = \lambda(t) \left(
s(\bx, \bz, t) - s_{\theta}(\bx, t)
\right)^2$. The population and empirical losses defined from~\eqref{eq:l2_loss_conditional} are respectively
\begin{align}
\mathcal{L}_{\text{CTSM}}(\btheta) 
= 
\E_{p(t, \bx, \bz)} [
l_{\btheta}(\bx, \bz, t)
]
, \qquad 
\hat{\mathcal{L}}_{\text{CTSM}}(\btheta) 
= 
\frac{1}{N} \sum_{i=1}^N
l_{\btheta}(\bx_i, \bz_i, t_i),
\end{align}
where the empirical loss uses \textit{i.i.d.} samples $(\bx_i, \bz_i, t_i)_{i \in \llbracket 1, N \rrbracket}$. In the following, we suppose that the model is well-specified, which means that there exists a $\btheta^*$ that parameterizes the true score.

\paragraph{Error formulas}

First, we compute the error in the parameters. Using \citet[Section 4.7]{bach2024learningtheorybook} and \citet[Theorem 5.23]{vandervaart2000asympstats}, 
\begin{align}
\sqrt{N}(\hat{\btheta} - \btheta^*) 
\sim 
\mathcal{N}(
0,
H(\btheta^*)^{-1} G(\btheta^*) H(\btheta^*)^{-1}
),
\end{align}
where $G(\btheta^*)$ and $H(\btheta^*)$ are matrices that will be later specified. 

Then, we obtain the error in the scores, using the delta method
\begin{align}
\sqrt{N}(s_{\hat{\btheta}}(\bx, t) - s_{\btheta^*}(\bx, t))
\sim
\mathcal{N}(
0,
\nabla_{\btheta} s_{\btheta}(\bx, t)|_{\btheta^*}^\top
H(\btheta^*)^{-1}
G(\btheta^*) H(\btheta^*)^{-1}
\nabla_{\btheta} s_{\btheta}(\bx, t)|_{\btheta^*}
).
\end{align}

From there, we compute the squared error in the scores. We now specify the remainder term in the asymptotic $N \rightarrow \infty$ analysis: it is in $o(N)$ and justified  under the standard  technical conditions of~\citet[Th. 5.23]{vandervaart2000asympstats}. We write it in expectation with respect to the law of $\hat{\btheta}$, 
\begin{align}
\E_{p(\hat{\btheta})}[
(s_{\hat{\btheta}}(\bx, t) - s_{\btheta^*}(\bx, t))^2
]
&=
\frac{1}{N}
e(\vx, t, \lambda^*, \lambda, p)
+
o(N^{-1})
\end{align}
where
\begin{align}
\label{eq:squared_score_error}
e(\vx, t, \lambda^*, \lambda, p)
&=
\mathrm{trace} \big(
H(\btheta^*)^{-1}
G(\btheta^*) 
H(\btheta^*)^{-1}
\nabla_{\btheta} s_{\btheta}(\bx, t)|_{\btheta^*}
\nabla_{\btheta} s_{\btheta}(\bx, t)|_{\btheta^*}^\top
\big).
\end{align}
And then in expectation with respect to the law of $(\vx, t)$
\begin{align}
&\E_{p_1(\vx), p_K(t), p(\hat{\btheta})}[
(s_{\hat{\btheta}}(\bx, t) - s_{\btheta^*}(\bx, t))^2
]
=
\frac{1}{N} e(\btheta^*, \lambda, p)
+
o(N^{-1})
\end{align}
where
\begin{align}
\label{eq:expected_squared_score_error}
    e(\btheta^*, \lambda, p)
    &=
    \mathrm{trace} \big(
    H(\btheta^*)^{-1}
    G(\btheta^*) 
    H(\btheta^*)^{-1}
    \E_{p_1(\vx), p_K(t), p(\hat{\btheta})}[
    \nabla_{\btheta} s_{\btheta}(\bx, t)|_{\btheta^*}
    \nabla_{\btheta} s_{\btheta}(\bx, t)|_{\btheta^*}^\top
    ]
    \big)
\end{align}

\paragraph{Specifying the matrices}
The following matrices were used above: we now recall their definition, using the same notation as in~\citet[Section 4.7]{bach2024learningtheorybook}.
\begin{align}
    G(\btheta^*)
    &=
    \E_{p(t),p(\bz),p_{t}(\bx|\bz)} [
    \nabla_{\btheta} l_{\btheta}(\bx, \bz, t)|_{\btheta^*}
    \nabla_{\btheta} l_{\btheta}(\bx, \bz, t)|_{\btheta^*}^\top
    ]
    \\
    H(\btheta^*)
    &=
    \E_{p(t),p(\bz),p_{t}(\bx|\bz)} [
    \nabla^2_{\btheta} l_{\btheta}(\bx, \bz, t)|_{\btheta^*}
    ].
\end{align}

\paragraph{Case of CTSM}
We specify
\begin{align}
    \nabla_{\btheta} l_{\btheta}(\bx, \bz, t)
    &=
    -2 \lambda(t) \left(
    s(\bx, \bz, t) - s_{\btheta}(\bx, t) 
    \right) 
    \cdot
    \nabla_{\btheta} s_{\btheta}(\bx, t),
    \\
    \nabla^2_{\btheta} l_{\btheta}(\bx, \bz, t)
    &=
    2 \lambda(t) 
    \cdot
    \nabla_{\btheta} s_{\btheta}(\bx, t)
    \nabla_{\btheta} s_{\btheta}(\bx, t)^\top
    -
    2 \lambda(t) \left(
    s(\bx, \bz, t) - s_{\btheta}(\bx, t) 
    \right) 
    \cdot
    \nabla^2_{\btheta} s_{\btheta}(\bx, t)
\end{align}
and evaluate them at $\btheta^*$. To simplify notations, we write $w(\bx, \bz, t) = s(\bx, \bz, t) - s_{\btheta^*}(x, t) = \partial_t \log p_t(\bx | \bz) -  \partial_t \log p_t(\bx)$.
\begin{align}
    \nabla_{\btheta} l_{\btheta}(\bx, \bz, t)|_{\btheta^*}
    &=
    -2 \lambda(t) w(\bx, \bz, t) 
    \cdot
    \nabla_{\btheta} s_{\btheta}(\bx, t)|_{\btheta^*},
    \\
    \nabla^2_{\btheta} l_{\btheta}(\bx, \bz, t)|_{\btheta^*}
    &=
    2 \lambda(t)
    \nabla_{\btheta} s_{\btheta}(\bx, t)|_{\btheta^*}
    \nabla_{\btheta} s_{\btheta}(\bx, t)|_{\btheta^*}^\top
    -
    2 \lambda(t) w(\bx, \bz, t)
    \cdot
    \nabla^2_{\btheta} s_{\btheta}(\bx, t)|_{\btheta^*}
    .
\end{align}
Finally, this yields
\begin{align}
    \label{eq:ctsm_matrices}
    G(\btheta^*)
    &=
    4
    \E_{p(t),p(\bz),p_{t}(\bx|\bz)} \left[
    \lambda(t)^2
    w(\bx, \bz, t)^2
    \cdot
    \nabla_{\btheta} s_{\btheta}(\bx, t)|_{\btheta^*}
    \nabla_{\btheta} s_{\btheta}(\bx, t)|_{\btheta^*}^\top    
    \right],
    \\
    H(\btheta^*)
    &=
    2
    \E_{p(t),p(\bz),p_{t}(\bx|\bz)} \left[
    \lambda(t)
    \cdot
    \nabla_{\btheta} s_{\btheta}(\bx, t)|_{\btheta^*}
    \nabla_{\btheta} s_{\btheta}(\bx, t)|_{\btheta^*}^\top
    -
    \lambda(t)
    \cdot
    w(\bx, \bz, t)
    \cdot
    \nabla^2_{\btheta} s_{\btheta}(\bx, t)|_{\btheta^*}
    \right].
\end{align}
\end{proof}

A sufficient condition to make the error null in~\eqref{eq:expected_squared_score_error}, is to have $w(\bx, \vz, t) = 0$.

\paragraph{Case of CTSM-v}
The derivations are largely the same. We have
\begin{equation}
l_{\btheta}(\bx,\bz,t) 
= 
\lambda(t) \norm{
\text{vec}(s(\bx,\bz,t)) - \text{vec}(s_{\btheta}(\bx,t))
}^{2} 
= 
\lambda(t) \sum_{i}
\left( (s(\bx,\bz,t))_{i} - s_{\btheta}(\bx,t)_{i}\right)^{2},
\end{equation}
where $\text{vec}(s(\bx,\bz,t))_{i} := \partial_t \log p_t(x^i | \bx^{<i}, \bz)$ indicates the $i$-th component of the vector $\text{vec}(s(\bx,\bz,t))=[\partial_t \log p_t(x^i | \bx^{<i}, \bz)]_{i \in \llbracket 1, D \rrbracket}^\top$. 

All that remains to specify the error are the matrices $\mG$ and $\mH$. We have
\begin{align}
\nabla_{\btheta}l_{\btheta}(\bx,\bz,t) &= -2 \lambda(t) \sum_{i}\left(s(\bx,\bz,t)_{i} - s_{\btheta}(\bx,t)_{i}\right) \nabla_{\btheta}s_{\btheta}(\bx,t)_{i},\\
\nabla^{2}_{\btheta}l_{\btheta}(\bx,\bz,t) &= 2\lambda(t)\sum_{i}\nabla_{\btheta}s_{\btheta}(\bx,t)_{i}\nabla_{\btheta}s_{\btheta}(\bx,t)_{i}^{\top} - 2 \lambda(t) \sum_{i}\left(s(\bx,\bz,t)_{i} - s_{\btheta}(\bx,t)_{i}\right)\nabla_{\btheta}^{2}s_{\btheta}(\bx,t)_{i}.
\end{align}
We now wish to evaluate these at $\btheta^{*}$. To simplify notations, we now denote by 
$w(\bx, \bz, t)_{i} = s(\bx,\bz,t)_{i} - s_{\btheta^*}(\bx,t)_{i} = \partial_t \log p_t(x^i | \bx^{<i}, \bz) - \E_{p_{t}(\bz|\bx)}\left[\partial_t \log p_t(x^i | \bx^{<i}, \bz)\right]$.
Now we can write
\begin{align}
\nabla_{\btheta}l_{\btheta}(\bx,\bz,t) &= -2\lambda(t) \sum_{i}w(\bx,\bz,t)_{i} \nabla_{\btheta}s_{\btheta}(\bx,t)_{i}|_{\btheta^*},\\
\nabla_{\btheta}^{2}l_{\btheta}(\bx,\bz,t) &= 2\lambda(t)\sum_{i}\nabla_{\btheta}s_{\btheta}(\bx,t)_{i}|_{\btheta^{*}}\nabla_{\btheta}s_{\btheta}(\bx,t)_{i}|_{\btheta^{*}}^{\top} -2\lambda(t)\sum_{i}w(\bx,\bz,t)_{i}\nabla_{\btheta}^{2}s_{\btheta}(\bx,t)|_{\btheta^{*}}.
\end{align}

As a result, we have
\begin{align}
\label{eq:ctsm_v_matrices}
G(\btheta^{*}) &= 4\E_{p(t),p(\bz),p_{t}(\bx|\bz)}\left[\lambda(t)^{2} \left(\sum_{i}w(\bx,\bz,t)_{i} \nabla_{\btheta}s_{\btheta}(\bx,t)_{i}|_{\btheta^*}\right)^{2}\right],\\
H(\btheta^{*}) &= 2\E_{p(t),p(\bz),p_{t}(\bx|\bz)}\left[\lambda(t)\sum_{i}\nabla_{\btheta}s_{\btheta}(\bx,t)_{i}|_{\btheta^{*}}\nabla_{\btheta}s_{\btheta}(\bx,t)_{i}|_{\btheta^{*}}^{\top} -\lambda(t)\sum_{i}w(\bx,\bz,t)_{i}\nabla_{\btheta}^{2}s_{\btheta}(\bx,t)|_{\btheta^{*}}\right].
\end{align}

A sufficient condition to make the error null in~\eqref{eq:expected_squared_score_error}, is to have $w(\bx, \vz, t)_{i} = 0$ for all $i$.

\newpage
\section{Additional Experimental Results}
\label{app:sec:additional_experimental_results}

\paragraph{Distributions with high discrepancies}

We report the results of the algorithms under different settings and different weighting schemes. For TSM we additionally report the results under uniform weighting, i.e. $\lambda(t)=1$.

\paragraph{Gaussians}

We report the main results in Table~\ref{tbl:gaussians1} and Table~\ref{tbl:gaussians2}. CTSM-v is consistently among the fastest and the best. The plot in the main paper is generated using TSM with Stein score normalization, CTSM with time score normalization and $c=1$ and CTSM with time score normalization and $c=1$.

We additionally report the results of using time score normalization for TSM in Table~\ref{tbl:gaussians-extras}. We did not observe decisive improvements, and remark that CTSM-v yields better results with the same weighting scheme.

\begin{table*}
        \begin{center}
        \caption{Results on Gaussians with $D$ being $2$, $5$ or $10$. $D$ is dimensionality, MSE is MSE to ground truth reported in the form of [mean, std], T is average time per step in ms. Unif indicates uniform weighting, Stein indicates Stein score normalization and Time indicates time score normalization, with Time 0 indicating using the real $c$ and Time 1 indicating using $c=1$.}
                \begin{tabular}{|l|l|l|l|l|l|l|}
                        \hline
                        & \multicolumn{2}{|c|}{$D=2$} & \multicolumn{2}{|c|}{$D=5$} & \multicolumn{2}{|c|}{$D=10$} \\
                        \hline
                        Algo & MSE & T & MSE & T & MSE & T \\
                        \hline
                        TSM+Unif & [0.21, 0.036] & 11.1 & [2.982, 1.738] & 12.5 & [12.478, 6.026] & 12.1 \\
                        \hline
                        TSM+Stein & [0.253, 0.142] & 13.4 & [2.408, 1.205] & 15.4 & [7.343, 1.378] & 14.0 \\
                        \hline
                        CTSM+Time 0 & [0.158, 0.049] & \textbf{3.9} & [1.37, 0.821] & 6.3 & [16.285, 11.047] & 5.3 \\
                        \hline
                        CTSM+Time 1 & [\textbf{0.078}, 0.017] & 4.5 & [0.987, 0.28] & 6.0 & [10.032, 5.476] & \textbf{4.8} \\
                        \hline
                        CTSM-v+Time 0 & [0.175, 0.045] & 8.3 & [0.86, 0.199] & 5.2 & [4.331, 0.727] & 5.1 \\
                        \hline
                        CTSM-v+Time 1 & [0.104, 0.014] & 4.0 & [\textbf{0.814}, 0.219] & \textbf{5.0} & [\textbf{1.616}, 0.203] & 4.9 \\
                        \hline
                \end{tabular}
        \label{tbl:gaussians1}
        \end{center}
\end{table*}

\begin{table*}
        \begin{center}
        \caption{Results on Gaussians with $D$ being $15$ or $20$. $D$ is dimensionality, MSE is MSE to ground truth reported in the form of [mean, std], T is average time per step in ms. Unif indicates uniform weighting, Stein indicates Stein score normalization and Time indicates time score normalization, with Time 0 indicating using the real $c$ and Time 1 indicating using $c=1$.}
                \begin{tabular}{|l|l|l|l|l|}
                        \hline
                        & \multicolumn{2}{|c|}{$D=15$} & \multicolumn{2}{|c|}{$D=20$} \\
                        \hline
                        Algo & MSE & T & MSE & T \\
                        \hline
                        TSM+Unif & [74.932, 60.02] & 13.8 & [335.45, 83.226] & 13.0 \\
                        \hline
                        TSM+Stein & [91.328, 48.905] & 14.3 & [329.779, 156.634] & 12.9 \\
                        \hline
                        CTSM+Time 0 & [36.922, 20.238] & 6.2 & [125.234, 30.715] & \textbf{3.9} \\
                        \hline
                        CTSM+Time 1 & [61.902, 19.891] & 5.5 & [50.756, 12.708] & 4.7 \\
                        \hline
                        CTSM-v+Time 0 & [16.529, 3.101] & 5.4 & [\textbf{41.945}, 13.973] & 5.0 \\
                        \hline
                        CTSM-v+Time 1 & [\textbf{8.88}, 1.921] & \textbf{4.8} & [43.861, 17.132] & 5.9 \\
                        \hline
                \end{tabular}
        \label{tbl:gaussians2}
        \end{center}
\end{table*}

\begin{table*}
        \begin{center}
        \caption{Additional results on Gaussians. $D$ is dimensionality, MSE is MSE to ground truth reported in the form of [mean, std], T is average time per step in ms. Unif indicates uniform weighting, Stein indicates Stein score normalization and Time indicates time score normalization, with Time 0 indicating using the real $c$ and Time 1 indicating using $c=1$.}
                \begin{tabular}{|l|l|l|l|l|}
                        \hline
                        & \multicolumn{2}{|c|}{TSM+Time 0} & \multicolumn{2}{|c|}{TSM+Time 1} \\
                        \hline
                        D & MSE & T & MSE & T \\
                        \hline
                        2 & [0.217, 0.063] & 11.7 & [0.451, 0.206] & 11.6 \\
                        \hline
                        5 & [3.764, 2.107] & 13.1 & [5.088, 4.481] & 12.0 \\
                        \hline
                        10 & [13.647, 2.953] & 13.9 & [30.196, 12.414] & 12.3 \\
                        \hline
                        15 & [96.588, 53.982] & 14.5 & [99.062, 34.036] & 31.8 \\
                        \hline
                        20 & [218.046, 70.411] & 14.2 & [135.942, 53.202] & 13.9 \\
                        \hline
                \end{tabular}
        \label{tbl:gaussians-extras}
        \end{center}
\end{table*}

\paragraph{Gaussian mixtures}

We report the main results on Gaussian mixtures in Table~\ref{tbl:gmms}. We set $\sigma$ in the Schrödinger bridge probability path to $1.0$ due to strong empirical results while enabling direct comparisons between TSM and CTSM(-v).

We additionally report the results with $\sigma=\sqrt{2.0}$ in Table~\ref{tbl:gmms-2.0} and the results with $\sigma=0.0$ in Table~\ref{tbl:gmms-0.0}. We observe that, setting $\sigma=\sqrt{2.0}$ results in worse performances for all methods. For TSM under uniform weighting, one can consider using $\sigma=0.0$, in which case the performance improves, though CTSM-v under $\sigma=1.0$ remains competitive.

\begin{table*}
        \begin{center}
        \caption{Results on GMMs with $\sigma=1.0$. $k$ determines the distance between two GMM components, MSE is MSE to ground truth reported in the form of [mean, std], T is average time per step in ms. Unif indicates uniform weighting, Stein indicates Stein score normalization and Time indicates time score normalization, with Time 0 indicating using the real $c$ and Time 1 indicating using $c=1$.}
                \begin{tabular}{|l|l|l|l|l|l|l|}
                        \hline
                        & \multicolumn{2}{|c|}{k=0.5} & \multicolumn{2}{|c|}{k=1.0} & \multicolumn{2}{|c|}{k=2.0} \\
                        \hline
                        Algo & MSE & T & MSE & T & MSE & T \\
                        \hline
                        TSM+Unif & [\textbf{173.473}, 52.466] & 28.9 & [276.545, 97.042] & 12.6 & [14643.815, 13997.568] & 61.8 \\
                        \hline
                        TSM+Stein & [232.948, 133.647] & 14.6 & [459.645, 260.768] & 17.6 & [3427.258, 3545.452] & 12.0 \\
                        \hline
                        CTSM+Time 0 & [880.47, 172.594] & \textbf{4.1} & [480.847, 151.097] & 4.5 & [646.945, 210.44] & 4.7 \\
                        \hline
                        CTSM+Time 1 & [923.082, 131.758] & 4.6 & [460.546, 186.5] & 4.2 & [547.603, 200.504] & 4.8 \\
                        \hline
                        CTSM-v+Time 0 & [173.804, 108.326] & 4.8 & [211.046, 69.472] & \textbf{4.0} & [319.981, 100.91] & 7.4 \\
                        \hline
                        CTSM-v+Time 1 & [221.519, 98.112] & 5.8 & [\textbf{181.082}, 68.879] & 4.7 & [\textbf{266.486}, 150.877] & \textbf{4.3} \\
                        \hline
                \end{tabular}
        \label{tbl:gmms}
        \end{center}
\end{table*}

\begin{table*}
        \begin{center}
        \caption{Results on GMMs with $\sigma=\sqrt{2.0}$. $k$ determines the distance between two GMM components, MSE is MSE to ground truth reported in the form of [mean, std], T is average time per step in ms. Unif indicates uniform weighting, Stein indicates Stein score normalization and Time indicates time score normalization, with Time 0 indicating using the real $c$ and Time 1 indicating using $c=1$.}
                \begin{tabular}{|l|l|l|l|l|l|l|}
                        \hline
                        & \multicolumn{2}{|c|}{k=0.5} & \multicolumn{2}{|c|}{k=1.0} & \multicolumn{2}{|c|}{k=2.0} \\
                        \hline
                        Algo & MSE & T & MSE & T & MSE & T \\
                        \hline
                        TSM+Unif & [1106.178, 550.442] & 33.3 & [1293.421, 270.072] & 12.5 & [6614.483, 1169.068] & 13.5 \\
                        \hline
                        TSM+Stein & [1460.023, 502.921] & 39.0 & [1564.266, 360.361] & 36.0 & [5180.453, 1786.018] & 12.7 \\
                        \hline
                        CTSM+Time 0 & [1934.401, 269.515] & 4.5 & [1872.342, 467.047] & 4.6 & [5961.52, 683.578] & 4.6 \\
                        \hline
                        CTSM+Time 1 & [2113.975, 403.59] & 8.0 & [2238.267, 123.69] & 4.6 & [6017.094, 344.537] & 4.0 \\
                        \hline
                        CTSM-v+Time 0 & [745.15, 158.202] & 4.6 & [1558.495, 379.161] & 4.5 & [5009.627, 1943.244] & 8.5 \\
                        \hline
                        CTSM-v+Time 1 & [762.231, 288.029] & 5.3 & [1762.826, 431.333] & 4.8 & [9226.993, 861.191] & 9.2 \\
                        \hline
                \end{tabular}
        \label{tbl:gmms-2.0}
        \end{center}
\end{table*}

\begin{table*}
        \begin{center}
        \caption{Results on GMMs with $\sigma=0.0$. $k$ determines the distance between two GMM components, MSE is MSE to ground truth reported in the form of [mean, std], T is average time per step in ms. Unif indicates uniform weighting, Stein indicates Stein score normalization and Time indicates time score normalization, with Time 0 indicating using the real $c$ and Time 1 indicating using $c=1$.}
                \begin{tabular}{|l|l|l|l|l|l|l|}
                        \hline
                        & \multicolumn{2}{|c|}{k=0.5} & \multicolumn{2}{|c|}{k=1.0} & \multicolumn{2}{|c|}{k=2.0} \\
                        \hline
                        Algo & MSE & T & MSE & T & MSE & T \\
                        \hline
                        TSM+Unif & [148.688, 97.058] & 14.6 & [70.908, 5.85] & 12.9 & [898.016, 847.255] & 49.1 \\
                        \hline
                \end{tabular}
        \label{tbl:gmms-0.0}
        \end{center}
\end{table*}

\newpage
\section{Experimental Details}
\label{app:sec:exp}

\subsection{Bug of TSM Implementation for Toy Experiments in \citet{choi2022densityratio}}
\label{app:sec:bug_tsm_toy}

We observed a bug for the TSM implementation of the code of \citet{choi2022densityratio}. Recall that the TSM objective is given by

\begin{align}
\begin{split}
    \mathcal{L}_{\text{TSM}}(\btheta)
    =
    2 \mathbb{E}_{p_0(\bx)}[s_{\btheta}(\bx, 0)]
    -
    2 \mathbb{E}_{p_1(\bx)}[s_{\btheta}(\bx, 1)]
    + 
    \\
    \E_{p(t, \bx)}
    [
    2 \partial_{t} s_{\btheta}(\bx, t) 
    +
    2 \dot{\lambda}(t)
    s_{\btheta}(\bx, t)
    +
    \lambda(t)
    s_{\btheta}(\bx, t)^{2}
    ].
\end{split}
\end{align}

However, \citet{choi2022densityratio} implemented
\begin{align}
\begin{split}
    \mathcal{L}_{\text{TSM}}(\btheta)
    =
    2 \mathbb{E}_{p_0(\bx)}[s_{\btheta}(\bx, 0)]
    -
    2 \mathbb{E}_{p_1(\bx)}[s_{\btheta}(\bx, 1)]
    + 
    \\
    \E_{p(t, \bx)}
    [
    2 \partial_{t} s_{\btheta}(\bx, t) 
    +
    \dot{\lambda}(t)
    s_{\btheta}(\bx, t)
    +
    \lambda(t)
    s_{\btheta}(\bx, t)^{2}
    ],
\end{split}
\end{align}
i.e. the scaling in front of $\dot{\lambda}(t)s_{\btheta}(\bx, t)$ is incorrect. We remark that this bug only applies when attempting to train purely based on TSM objective on toy experiments.

\subsection{Implementation Details}
\label{app:sec:impl-details}

Our implementation of TSM is largely based on the code provided by \citet{choi2022densityratio}. However, especially for other than the EBM experiments, we improve their code in several ways. Apart from bug fixes, we use analytical expressions for the weighting quantities.

For both TSM and CTSM, following \citet{choi2022densityratio}, we add a small number $\epsilon$ to the time during training and inference. We follow the convention that $\epsilon$ is added when the probability path results in approximately degenerate distribution at that time. For the toy experiments, we set $\epsilon=1e-5$, while for EBM experiments we set $\epsilon=1e-4$ during training and $\epsilon=1e-5$ during inference.

For experiments apart from EBM, for each task we employ a fixed validation set of size $10000$ and select the learning rates based on results on the sets. After a certain number of steps, an evaluation step is performed, and the model is evaluated based on both the validation set and a test set, consisting of $10000$ samples dynamically generated based on the data generation process. The best test set results are obtained by selecting the steps corresponding to the best validation set results.

Following \citet{choi2022densityratio}, the density ratios are evaluated using the initial value problem ODE solver as implemented in SciPy \citep{virtanen2020scipy}, where we use the default RK45 integrator \citep{dormand1980family} with $rtol=1e-6$ and $atol=1e-6$.

\subsection{Distributions with High Discrepancies}
\label{app:sec:exp1}

The experimental setup is similar to \citet{choi2022densityratio}. We use as score model a simple MLP with structure $[D+1, 256, 256, 256, N_{output}]$ and ELU activation \citep{clevert2016fast} based on \citet{choi2022densityratio}, where $D$ is the dimensionality of the data and $N_{output}=D$ for CTSM-v and $1$ otherwise. Note that the input shape is $D+1$, as the time $t$ is concatenated to the input. All models are trained for $20000$ iterations. After each $1000$ iterations, the model is evaluated. For each scenario, the best learning rate is selected based on the best val set performances of a single run. Afterwards two runs under the same learning rate but different random seeds are run, and the final results on the test set is reported.

\paragraph{Gaussians}

Following \citet{choi2022densityratio}, we employ the variance-preserving probability path, with $\alpha_{t}=t$.

The learning rate is tuned between $[5e-4, 1e-3, 2e-3, 5e-3, 1e-2]$. Following \citet{choi2022densityratio}, the MSEs are evaluated using samples from both $p_{0}$ and $p_{1}$.

\paragraph{Gaussian mixtures}

The learning rate is tuned between $[1e-4, 2e-4, 5e-4, 1e-3, 2e-3, 2e-3, 5e-3, 1e-2, 2e-2, 5e-2]$. There is one case where the selected learning rate for each algorithm is the smallest, and we manually verify that using lrs $5e-5$ or $2e-5$ does not result in improved results. Following \citet{choi2022densityratio}, the MSEs are evaluated using samples from both $p_{0}$ and $p_{1}$.

The two components are isotropic, with the covariance given by $\sigma^{2}\bI$. We use $\texttt{k}$ to specify the distance between the means of the two components as a multiple of the standard deviation $\sigma$.

We know that the mean of a GMM is simply given by the mean of the means of each component, while the covariance of a GMM with two components of equal weights is given by the following formula
\begin{align}
\bSigma = \frac{1}{2} \bSigma_{1} + \frac{1}{2} \bSigma_{2} + \frac{1}{4} (\bmu_{1}-\bmu_{2})(\bmu_{1}-\bmu_{2})^{\top}.
\end{align}
Consider the case where $\bmu_{1}-\bmu_{2} = \texttt{k}\sigma$. One has that, in order for the GMM to have variance equal to $1$ in each dimension, $\sigma=\sqrt{4 / (4+\texttt{k}^{2})}$. The means of the two components are given by $\bmu-\frac{1}{2} \texttt{k} \sigma$ and $\bmu+\frac{1}{2} \texttt{k} \sigma$, respectively.

In principle, using $var=2$ for SB path results in preserved variance along the path. However, empirically we observe that it is beneficial to use a smaller variance, e.g. $var=1$.

\subsection{Mutual Information Estimation}
\label{sec:mi-estimation}

The probability path is given by
\begin{equation}
p_{t}(\bx|\bz) = \mathcal{N}(\bx|t\bx_{1},\left(1-t^{2}\right)\bI).
\end{equation}

The derivations for the objective of TSM objective can be found in \citet{choi2022densityratio}. Here we derive the training objective for the CTSM-v objective.

Using similar settings and notations as in \citet{choi2022densityratio}, we parameterize a single matrix $\bS$, as defined below.

Denote the covariance matrix of $p_{1}$ as $\bSigma$. Use $\bS$ to denote $\bSigma-\bI$.

Recall that the true time score is given by the posterior expectation of $\partial_{t}\log p_{t}(\bx|\bz)$. We have
\begin{align}
\log p(\bz) &= \log\mathcal{N}(\bz|\bzero,\bSigma) = -\frac{1}{2}\bz^{\top}\bSigma^{-1}\bz+\text{const.},\\
\log p_{t}(\bx|\bz) &= \log\mathcal{N}(\bx|t\bz, (1-t^{2})\bI) = -\frac{1}{2}t\bz^{\top}\frac{1}{1-t^{2}}t\bz + \text{const.}.
\end{align}
The posterior distribution $p_{t}(\bz|\bx)$ can be solved in closed-form, which is a Gaussian distribution, with covariance $\bar{\bSigma} = \left(\bSigma^{-1} + \frac{t^{2}}{1-t^{2}}\bI\right)^{-1}$ and mean $\frac{t}{1-t^{2}}\bar{\bSigma}\bx$. Similar to \citet{choi2022densityratio}, the above quantities can be expressed in terms of the inverse of $\bI + t^{2}\left(\bSigma - \bI\right) = (1-t^{2})\left(\bI + \frac{t^{2}}{1-t^{2}}\bSigma\right)$; we have
\begin{align}
&\quad \left(\bSigma^{-1} + \frac{t^{2}}{1-t^{2}}\bI\right)^{-1} = \left(\bSigma^{-1}\left(\bI + \frac{t^{2}}{1-t^{2}}\bSigma\right)\right)^{-1} \\
&= \left(\bI+\frac{t^{2}}{1-t^{2}}\bSigma\right)^{-1}\bSigma = \left(1-t^{2}\right)\left(\bI + t^{2}\left(\bSigma - \bI\right)\right)^{-1}\bSigma.
\end{align}

The expectation of $\partial_{t}\log p_{t}(\bx|\bz)$, which by definition is also the value of $\partial_{t}\log p_{t}(\bx)$, can also be obtained in closed-form. The expectation of the individual entries of $\partial_{t}\log p_{t}(\bx|\bz)$ are also given in closed-form.

\begin{align}
\left[\partial_{t}\log p_{t}(\bx|\bz)\right]_{i} &= \frac{t}{1-t^{2}} - \frac{t}{\left(1-t^{2}\right)^{2}}\left[\left(\bx-t\bx_{1}\right)^{2}\right]_{i} + \frac{1}{1-t^{2}}\left[\left(\bx-t\bx_{1}\right)\bx_{1}\right]_{i},\\
\E_{p_{t}(\bz|\bx)}\left[\partial_{t}\log p_{t}(\bx|\bz)\right]_{i} &= \frac{t(1-t^{2}) -t\left(\norm{\bar{\bmu}_{i}}^{2}+\bar{\bSigma}_{ii}\right) - t\norm{\bx_{i}}^{2} + (t^{2}+1)\bx_{i}\bar{\bmu}_{i}}{\left(1-t^{2}\right)^{2}},
\end{align}
where $\bar{\bmu}$ and $\bar{\bSigma}$ are the mean and covariance of the posterior distribution as discussed above. As such, perhaps unsurprisingly, CTSM-v does not induce much computational overhead above TSM.

For CTSM objective, the model is trained to match the time score, while for CTSM-v objective, the model is trained to match the entire $\text{vec}\left(\partial_{t}\log p_{t}(\bx|\bz)\right)$.

The hyperparameters are inspired by \citet{choi2022densityratio} and listed in Table~\ref{tbl:mi-hyper}. For all methods, the learning rates are tuned between $1e-4$, $1e-3$ and $1e-2$.

\begin{table}[]
    \centering
    \caption{Hyperparameters for MI experiment. After every $\text{eval freq}$ steps, an evaluation is performed, with the first result after the first $\text{eval freq}$ steps.}
    \begin{tabular}{c|c|c|c}
        D & n iters & eval freq & batch size \\
        $40$ & $20001$ & $2000$ & $512$ \\
        $80$ & $50001$ & $5000$ & $512$ \\
        $160$ & $200001$ & $5000$ & $512$ \\
        $320$ & $400001$ & $8000$ & $256$
    \end{tabular}
    \label{tbl:mi-hyper}
\end{table}

\subsection{Energy-based Modeling}
\label{app:sec:ebm}

\subsubsection{General Methodology}

We employ the same variance-preserving probability path as used in \citet{choi2022densityratio}, which in turn comes from diffusion models literature \citep{ho2020ddpm,song2021sde}.

For reproducing TSM results, we use a batch size of $500$ and use polynomial interpolation with buffer size $100$, matching the reported hyperparameters in \citet{choi2022densityratio}. Following \citet{choi2022densityratio}, we tune the step size of TSM between $[2e-4, 5e-4, 1e-3]$. For CTSM-v, we largely reuse the hyperparameters, while tuning the step size between $[5e-4, 1e-3, 2e-3]$.

For CTSM-v objective, we parameterize the model to output the time score normalized by the approximate variance. Specifically, for a given $t$, we calculate $\text{Var}\left(\partial_{t}\log p_{t}(\bx|\bz)\right)$ where $c$ is assumed to be $1$, and the score network is trained to predict $\frac{\partial_{t}\log p_{t}(\bx)}{\text{Std}\left(\partial_{t}\log p_{t}(\bx|\bz)\right)}$; this ensures that the regression target is zero mean and having reasonable variances across $t$.

In previous works \citep{Rhodes2020,choi2022densityratio}, different normalizing flows are fitted to the data, and DRE is carried out making use of the flows.

The flows can naturally be utilized in different ways. Denote the latent space of the flow as $\bu$, and the ambient space of the flow as $\bx$. \citet{choi2022densityratio} consider the following scheme: 
\begin{enumerate}
\item An SDE is defined on $\bu$ space, interpolating between Gaussian and the empirical distribution induced by final samples on $\bu$ space obtained by transforming the data points from $\bx$ space, 
\item Intermediate samples on $\bu$ space are transformed into $\bx$ space using the flow, inducing a time varying distribution on $\bx$ space,
\item The score network takes as input $\bx$ and $t$, and is trained to predict the time score.
\end{enumerate}

Note that a flow is a bijection. Consider a time-varying density $p_{t}(\bx)$. For any $t$, we use the same bijective transformation $T$ to obtain the pair of $\bu$ and $\bx$. We have
\begin{equation}
\partial_{t}\log p_{t}(\bx) = \partial_{t}\log \left(p_{t}(\bu)\left\vert\text{det}J_{T}(\bu)\right\vert^{-1}\right) = \partial_{t}\log p_{t}(\bu) + \partial_{t}\log \left\vert\text{det}J_{T}(\bu)\right\vert^{-1}=\partial_{t}\log p_{t}(\bu).
\end{equation}
As such, the time score is invariant across bijections.

With CTSM, inspired by previous approaches, we also consider a probability path in $\bu$ space. One needs the time score of the conditional distribution, which needs to be computed in $\bu$ space. One can in principle train the score network either by feeding in coordinates of points in the $\bu$ space or the corresponding coordinates in $\bx$ space, where the conditional target vector field is computed in $\bu$ space.

\begin{enumerate}
\item An probability path is defined on $\bu$ space, interpolating between Gaussian and the empirical distribution induced by samples,
\item The score network takes as input either $\bx$ or $\bu$ along with $t$, and learns the time score.
\end{enumerate}

Note that it is correct to feed in the score network either $\bx$ or $\bu$; when the model takes as input $\bx$, one can interpret that the normalizing flows is a part of the score network, i.e. $\tilde{\bs}_{\btheta}(\bu,t) = \bs_{\btheta}(\bbf^{-1}\left(\bx\right),t)$, where $\bbf$ is the normalizing flows that is fixed and does not need to be learned and $\bs$ is the score network that we parameterize. As such, the correctness is guaranteed by standard CTSM / CTSM-v identities. We empirically observe that directly feeding in $\bx$ coordinates leads to better BPD estimates. 

We remark that both TSM and CTSM need to map between $\bu$ and $\bx$ coordinates using the normalizing flows. However, while CTSM only need to map both $\bu$ to $\bx$ and $\bx$ to $\bu$ exactly once, TSM requires an extra $\bu$ to $\bx$ map due to needed by the boundary condition.

\subsubsection{Experimental Details}

In terms of EBM with Gaussian flows, we observe that, possibly due to the parameterization, models trained using CTSM-v may require a larger number of integration steps compared with TSM when evaluating the density ratio using an ODE integrator with specific error tolerances as described in Section~\ref{app:sec:impl-details}: on MNIST test set with batch size $1000$, TSM requires on average $489.2$ evaluations, while CTSM-v requires on average $830.6$ evaluations. However, we remark that it is unclear what the true time scores are like.

Especially with TSM, the BPD estimate on the validation set can vary greatly. In general, we report the final results using the checkpoint that resulted in the best validation set result, unless the BPD estimate falls below $0$, in which case we consider the training as unstable.

Previous works \citep{Rhodes2020,choi2022densityratio} experimented with performing DRE utilizing copula flows and RQ-NSF flows. We observe that CTSM-v suffers from unstable optimization, with the BPD dropping at first but shoots back to a large value afterwards. We hypothesize that the parameterization does not agree with the inductive bias of the employed score network.

When training CTSM-v in the ambient space, we employ as score network a more advanced U-Net largely based on \citet{song2021sde} and \citet{choi2022densityratio}, which, among others, employs residual blocks. As noted by \citet{choi2022densityratio}, residual blocks result in unstable training for TSM. However, CTSM-v works well with this version of the U-Net. We also experimented with training using TSM with the U-Net as used by \citet{choi2022densityratio} in the ambient space, but found the BPD estimates on the val set to be generally larger than $5$ and did not explore it further.

With Gaussian flows, all experiments were run using one NVIDIA V100 GPU each. With ambient space, the models were trained and evaluated using one NIVDIA A100 GPU each, while the running times were obtained based on $10000$ steps using one NVIDIA V100 GPU each. We observed that the training dynamics of the models may vary across the employed GPUs, even with the other settings kept the same.

\subsection{Sampling}

We employ annealed MCMC to draw samples from the learned score network. We draw a total of $100$ samples. For each sample, we construct $1000$ intermediate distributions, where each intermediate distribution is targeted using a single HMC step. The intermediate distributions are constructed by linearly interpolating between $0$ and $1$ and setting
\begin{equation}
\log p_{t}(\bx) = \log p_{0}(\bx) + \int_{\tau=0}^{t}\partial_{\tau}\log p_{\tau}(\bx)\dd \tau.
\end{equation}

After which, we run another $100$ steps of HMC to further refine the samples.

Each step of HMC contains $10$ leapfrog steps. When using Gaussian flows, we observe a correlation between sample quality and the estimated log constant, where the sample quality is good when the estimated log constant is close to $0$. Based on the observation, we tune the step size of HMC on a grid in the form of $[1e-n,2.5e-n,5e-n,7.5e-n]$.

For Gaussian flows, we run annealed MCMC with a batch size of $100$. The first $64$ samples drawn from models trained using TSM and CTSM-v are shown in Figure~\ref{fig:mcmc_samples}.

\begin{figure*}
\centering
\begin{tabular}{ccc}
    \includegraphics[width=0.3\textwidth]{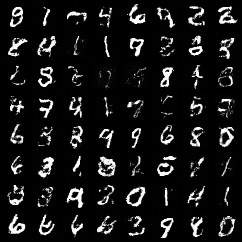} & \includegraphics[width=0.3\textwidth]{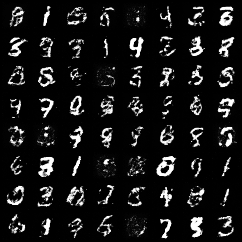}
\end{tabular}
\caption{Left: samples drawn from a model trained with TSM, Gaussian flows; right: samples drawn from a model trained with CTSM-v, Gaussian flows.}
\label{fig:mcmc_samples}
\end{figure*}

For pixel space CTSM-v, we consider both annealed MCMC and the Probability Flow (PF) ODE \citep{song2021sde}. For annealed MCMC, we use a batch size of $50$ and run $2$ individual batches. For PF ODE, we use a batch size of $64$. The first $64$ samples were reported in the main paper.

\end{document}